\title{Computing FO-Rewritings in \EL in Practice:\\ from Atomic 
to Conjunctive Queries\thanks{This is a pre-print of an article published in ISWC2018. The final authenticated version is available online at: https://doi.org/10.1007/978-3-319-68288-4\_21}}
\date{}
\author{ 
Peter Hansen \and Carsten Lutz
}
\institute{
University of Bremen, Germany \\ 
\{hansen, clu\}@informatik.uni-bremen.de
}
\begin{document}
\maketitle

\begin{abstract}
  A prominent approach to implementing ontology-mediated queries
  (OMQs) is to rewrite into a first-order query, which is then
  executed using a conventional SQL database system. We consider the
  case where the ontology is formulated in the description logic \EL
  and the actual query is a conjunctive query and show that
  rewritings of such OMQs can be efficiently computed in practice, in
  a sound and complete way. Our approach combines a reduction with a
  decomposed backwards chaining algorithm for OMQs that are based on
  the simpler atomic queries, also illuminating the relationship
  between first-order rewritings of OMQs based on conjunctive and on
  atomic queries. Experiments with real-world ontologies show
  promising results.
\end{abstract}

\section{Introduction}

One of the most important tools in ontology-mediated
querying is \emph{query rewriting}: reformulate a given
ontology-mediated query (OMQ) in an equivalence-preserving way in a
query language that is supported by a database system used to store
the data.  Since SQL is the dominating query language in conventional
database systems, rewriting into SQL and into first-order logic (FO)
as its logical core has attracted particularly much attention
\cite{DLLiteJAR07,OurTODS14,OurICDT17,IJCAI13,IJCAI15,IJCAI16,TheFrench,PierisBarcelo17}.
In fact, the DL-Lite family of description logics (DLs) was invented
specifically with the aim to guarantee that FO-rewritings of OMQs
(whose ontology is formulated in DL-Lite) always exist
\cite{DLLiteJAR07,DLLiteJAIR09}, but is rather restricted in
expressive power. For essentially all other DLs, there are OMQs which
cannot be equivalently rewritten into an FO query. However, ontologies
used in real-world applications tend to have a very simple structure
and, consequently, one may hope that FO-rewritings of practically
relevant OMQs exist in the majority of cases. This hope was confirmed
in an experimental evaluation carried out in the context of the \EL
family of description logics where less than 1\kern0.07em\% of the considered
queries was found not to be FO-rewritable~\cite{IJCAI15}; moreover,
most of the negative cases seemed to be due to modeling mistakes in
the ontology.

In this paper, we focus on the description logic \EL, which can be
viewed as a logical core of the OWL EL profile of the OWL~2 ontology
language \cite{owlprofiles}.
We use $(\Lmc,\Qmc)$ to denote the OMQ language that consists of all
OMQs where the ontology is formulated in the description logic \Lmc
and the actual query is formulated in the query language
\Qmc. Important choices for \Qmc include \emph{atomic queries (AQs)}
and the much more expressive \emph{conjunctive queries (CQs)}.  It has
been shown in \cite{IJCAI13} that for OMQs from $(\EL,\text{AQ})$, it
is \ExpTime-complete to decide
FO-rewritability
. Combining the techniques from \cite{IJCAI13} and the backwards
chaining approach to query rewriting brought forward e.g.\ in
\cite{TheFrench,
  Backchase99}, a practical algorithm for computing FO-rewritings of
OMQs from $(\EL,\text{AQ})$ was then developed in \cite{IJCAI15}. This
algorithm is based on a \emph{decomposed version} of backwards
chaining that implements a form of structure sharing. It was
implemented in the \emph{Grind} system and shown to perform very well
in practice \cite{IJCAI15}. 
It is important to remark that the algorithm is
\emph{complete}, that is, it computes an FO-rewriting whenever there
is one and reports failure otherwise.

The aim of this paper is to devise a way to efficiently compute
FO-rewritings of OMQs from $(\EL,\text{CQ})$, and thus the challenge
is to deal with conjunctive queries instead of only with atomic
ones. Note that, as shown in \cite{IJCAI16}, FO-re\-wri\-tability in
$(\EL,\text{CQ})$ is still \ExpTime-complete. Our approach is to
combine a reduction with the decomposed algorithm from \cite{IJCAI15},
also illuminating the relationship between first-order rewritings of
OMQs based on CQs and on AQs. It is worthwhile to point out that naive
reductions of FO-rewritability in $(\EL,\text{CQ})$ to
FO-rewritability in $(\EL,\text{AQ})$ fail. In particular,
FO-rewritability of all AQs that occur in a CQ $q$ are neither a
sufficient nor a necessary condition for $q$ to be FO-rewritable.  As
a simple example, consider the OMQ that consists of the ontology and query 
$$
  \Omc = \{ \exists r . A \sqsubseteq A, \ \exists s . \top
  \sqsubseteq A\} \quad 
  \text{ and }
\quad q(x)=\exists y \, (A(x)
\wedge s(x,y))
$$
and which is FO-rewritable into $\exists y \, s(x,y)$, but the only AQ
$A(x)$ that occurs in $q$ is not FO-rewritable in the presence of
\Omc.\footnote{OMQs also allow to fix the signature (set of concept
  and role names) that can occur in the ABox. In this example, we do
  not assume any restriction on the ABox signature.} In fact, it is
not clear how to attain a reduction of FO-rewritability in
$(\EL,\text{CQ})$ to FO-rewritability in $(\EL,\text{AQ})$, and even
less so a polynomial time one. This leads us to considering mildly
restricted forms of CQs and admitting reductions that make certain
assumptions on the algorithm used to compute FO-rewritings in
$(\EL,\text{AQ})$---all of them are satisfied by the decomposed
backwards chaining algorithm implemented in Grind.

We first consider the class of \emph{tree-quantified CQs (tqCQs)} in
which the quantified parts of the CQ form a collection of directed
trees. In this case, we indeed achieve a polynomial time reduction to
FO-rewritability in $(\EL,\text{AQ})$. To also transfer actual
FO-rewritings from the OMQ constructed in the reduction to the
original OMQ, we make the assumption that the rewriting of the former
takes the form of a UCQ (union of conjunctive queries) in which every
CQ is tree-shaped and that, in a certain sense made precise in the
paper, atoms are never introduced into the rewriting `without a
reason'.  Both conditions are very natural in the context of backwards
chaining and satisfied by the decomposed algorithm.

We then move to \emph{rooted CQs (rCQs)} in which every quantified
variable must be reachable from some answer variable (in an undirected
sense, in the query graph). We consider this a mild restriction and
expect that almost all queries in practical applications will be rCQs.
In the rCQ case, we do not achieve a `black box' reduction. Instead, we
assume that FO-rewritings of the constructed OMQs from
$(\EL,\text{AQ})$ are obtained from a certain straightforward
backwards chaining algorithm or a refinement thereof as implemented in
the Grind system. We then show how to combine the construction of
(several) OMQs from $(\EL,\text{AQ})$, similar to those constructed in
the tqCQ case, with a modification of the assumed algorithm to decide
FO-rewritability in $(\EL,\text{rCQ})$ and to construct actual
rewritings. The approach involves exponential blowups, but only in
parameters that we expect to be very small in practical cases and
that, in particular, only depend on the actual query contained in the
OMQ but not on the ontology.

We have implemented our approach in the Grind system and carried
out experiments on five real-world ontologies with 10 hand-crafted
CQs for each. The average runtimes are between 0.5 and 19 seconds
(depending on the ontology), which we consider very reasonable
given that we are dealing with a complex static analysis problem. 

Proofs are deferred to the appendix, which is made available at
\url{http://www.cs.uni-bremen.de/tdki/research/papers.html}.

\medskip

{\bf Related Work.} We directly build on our prior work in
\cite{IJCAI15} as discussed above, and to a lesser degree also on
\cite{IJCAI13,IJCAI16}. The latter line of work has recently been
picked up in the context of existential rules
\cite{PierisBarcelo17}. The distinguishing
features of our work are that (1)~our algorithms are sound, complete,
and terminating, that is, they find an FO-rewriting if there is one
and report failure otherwise, and (2)~we rely on the decomposed
calculus from \cite{IJCAI15} that implements structure sharing for
constructing small rewritings and achieving practical feasibility. We
are not aware of other work that combines features (1) and (2) and is
applicable to OMQs based on \EL. In the context of the description
logic DL-Lite, though, the construction of small rewritings has
received a lot of attention, see e.g.\
\cite{GottlobKKPSZ14,KikotKZ12,Rodriguez-MuroC12,RosatiA10}. Producing
small rewritings of OMQs whose ontology is a set of existential rules
has been studied in \cite{KonigLM15}, but there are no termination
guarantees. Constructing small \emph{Datalog}-rewritings of OMQs based on
\EL, which are guaranteed to always exist, was studied e.g.\ in
\cite{EiterOSTX12,Perez-UrbinaMH10,StefanoniMH12,TrivelaSCS15}.  A
different approach to answering \EL-based OMQs using SQL databases is
the combined approach where the consequences of the ontology are
materialized in the data \cite{LutzTomanWolterIJCAI09,StefanoniEtAl}.


















\section{Preliminaries}
\label{sect:prelims}

Let \NC, \NR, and \NI be countably infinite sets of \emph{concept
  names}, \emph{role names}, and \emph{individual names}. An 
\emph{\EL-concept} is formed according to the syntax rule
$$
  C,D ::= \top \mid A \mid C \sqcap D \mid \exists r . C
$$
where $A$ ranges over \NC and $r$ over \NR. An \emph{\EL-TBox} \Tmc
is a finite set of \emph{concept inclusions} $C \sqsubseteq D$, with
$C$ and $D$ \EL-concepts. Throught the paper, we use \EL-TBoxes as
ontologies.
An \emph{ABox} is a finite set of
\emph{concept assertions} $A(a)$ and \emph{role assertions} $r(a,b)$
where $a$ and $b$ range over \NI.  We use 
$\mn{Ind}(\Amc)$ to denote the set of individual names in the ABox
\Amc. A \emph{signature} is a set of concept and role names. When an
ABox uses only symbols from a signature $\Sigma$, then we call it a
\emph{$\Sigma$-ABox}. To emphasize that a signature $\Sigma$ is used
to constrain the symbols admitted in ABoxes, we sometimes call
$\Sigma$ an \emph{ABox signature}.

The semantics of concepts, TBoxes, and ABoxes is defined in the usual
way, see \cite{DL-Textbook}. We write $\Tmc \models C \sqsubseteq D$
if the concept inclusion $C \sqsubseteq D$ is satisfied in every model
of \Tmc; when \Tmc is empty, we write $\models C \sqsubseteq
D$. 
As usual in ontology-mediated querying, we make the \emph{standard
  names assumption}, that is, an interpretation \Imc satisfies a
concept assertion $A(a)$ if $a \in A^\Imc$
 and a role assertion $r(a,b)$ if $(a,b) \in r^\Imc$.

A \emph{conjunctive query (CQ)} takes the form
  $q(\xbf)=\exists \ybf \, \vp (\xbf,\ybf)$ with $\xbf,\ybf$ tuples of
  variables and $\vp$ a conjunction of atoms of the form $A(x)$ and
  $r(x,y)$ that uses only variables from $\mn{var}(q) = \xbf \cup
  \ybf$. The variables $\xbf$ are the \emph{answer variables} of $q$,
  denoted $\mn{avar}(q)$, and the \emph{arity} of $q$ is the length of
  $\xbf$.
  Unless noted otherwise, we allow equality in CQs, but we assume
  w.l.o.g.\ that equality atoms contain only answer variables, and
  that when $x=y$ is an equality atom in $q$, then $y$ does not occur
  in any other atoms in $q$. Other occurences of equality can be
  eliminated by identifying variables. 
  An
  \emph{atomic query (AQ)} is a conjunctive query of the form $A(x)$.
  A \emph{union of conjunctive queries (UCQ)} is a disjunction of CQs
  that share the same answer variables.  

  An \emph{ontology-mediated query (OMQ)} is a triple
  $Q=(\Tmc\!,\Sigma,q)$ where \Tmc is a TBox, $\Sigma$ an ABox
  signature, and $q$ a CQ. We use $(\EL,\text{AQ})$ to denote the set
  of OMQs where \Tmc is an \EL-TBox and $q$ is an AQ, and similarly
  for $(\EL,\text{CQ})$ and so on.  We do generally not allow equality
  in CQs that are part of an OMQ. Let $Q=(\Tmc\!,\Sigma,q)$ be an
  OMQ, 
  \Amc a $\Sigma$-ABox and $\abf \subseteq \mn{Ind}(\Amc)$. We write
  $\Amc \models Q(\abf)$ if $\Imc \models q(\abf)$ for all models \Imc
  of \Tmc and \Amc. In this case, $\abf$ is a \emph{certain answer} to
  $Q$ on \Amc.

\begin{example}\label{ex:omq}
  Consider an example from the medical domain. The following ABox holds data 
  about patients and diagnoses:
  \[ \Amc = \{ \mn{Person}(a), \mn{hasDisease}(a, oca_1), 
  \mn{Albinism}(oca_1) \} \]
  A TBox $\Tmc_1$ is used to make domain knowledge available:
  \begin{align*} 
    \Tmc_1 = \{\mn{Albinism} &\sqsubseteq \mn{HereditaryDisease}, \\ 
    \mn{Person} \sqcap \exists \mn{hasDisease.HereditaryDisease} &\sqsubseteq 
    \mn{GeneticRiskPatient} \}
    \end{align*}
  Let $Q_1$ be the OMQ $(\Tmc_1, \Sigma_\mn{full}, q_1(x))$, where $q_1(x) = 
  \mn{GeneticRiskPatient}(x)$, and $\Sigma_\mn{full}$ contains all concept 
  and role names. It can be verified that $\Amc \models Q_1(a)$.
  \myEndEx
\end{example}
We do not distinguish between a CQ and the set of atoms in it and
associate with each CQ $q$ a directed graph $G_q :=
(\mn{var}(q),\{(x,y) \mid r(x,y) \in q
\})$ 
(equality atoms are not reflected).
A CQ $q$ is \emph{tree-shaped} if $G_q$ is a directed tree and
$r(x,y),s(x,y) \in q$ implies $r=s$. A \emph{tree CQ (tCQ)} is a
tree-shaped CQ with the root the only answer variable and a \emph{tree
  UCQ (tUCQ)} is a disjunction of tree CQs.  Every \EL-concept can be
viewed as a tree-shaped CQ and vice versa; for example, the
\EL-concept $A \sqcap \exists r . (B \sqcap \exists s . A)$
corresponds to the CQ $q(x) = \exists y,z \, A(x) \wedge r(x,y) \wedge
B(y) \wedge s(y,z) \wedge A(z)$. We will not always distinguish
between the two representations and even mix them. We might thus write
$\exists r.q$ to denote an \EL-concept when $q$ is a tree-shaped CQ;
if $q(x)$ is as in the example just given, then $\exists r . q$ is the
\EL-concept $\exists r . (A \sqcap \exists r . (B \sqcap \exists s
. A))$.  If convenient, we also view a CQ $q$ as an ABox $\Amc_q$
which is obtained from $q$ by dropping equality atoms and then
replacing each variable with an individual (not distinguishing answer
variables from quantified variables).  A \emph{rooted CQ (rCQ)} is a
CQ $q$ such that in the undirected graph induced by $G_q$, every
quantified variable is reachable from some answer variable.  A
\emph{tree-quantified CQ (tqCQ)} is an rCQ $q$ such that after
removing all atoms $r(x,y)$ with $x,y \in \mn{avar}(q)$, we obtain a
disjoint union of tCQs. We call these tCQs the \emph{tCQs in $q$}. For
example, $q(x_1,x_2)=\exists y_1,y_2 \, r(x_1,x_2) \wedge r(x_2,x_1)
\wedge r(x_1,y_1) \wedge s(x_2,y_2)$ is a tqCQ and the tCQs in $q$ are
$\exists y_1 \, r(x_1,y_1)$ and $\exists y_2 \, s(x_2,y_2)$; by adding
to $q$ the atom $r(y_1,y_2)$, we obtain an rCQ that is not a tqCQ.
%
%

An OMQ $Q=(\Tmc\!,\Sigma,q)$ is \emph{FO-rewritable} if there is a
first-order (FO) formula $\varphi$ such that $\Amc \models Q(\abf)$
iff $\Amc \models \varphi(\abf)$ for all $\Sigma$-ABoxes \Amc. In this
case, $\vp$ is an \emph{FO-rewriting of} $Q$. When $\vp$ happens to be
a UCQ, we speak of a \emph{UCQ-rewriting} and likewise for other
classes of queries. It is known that FO-rewritability coincides with
UCQ-rewritability for OMQs from $(\EL,\text{CQ})$
\cite{OurTODS14,IJCAI13}; note that equality is important here as, for
example, the OMQ $(\{B \sqsubseteq \exists r . A \}, \{B,r\},q)$ with
$q(x,y) = \exists z (r(x,z) \wedge r(y,z) \wedge A(z))$ rewrites into
the UCQ $q \vee (B(x) \wedge x=y)$, but not into an UCQ that does not
use equality.
\begin{example}\label{ex:fonoyes}
  We extend the TBox $\Tmc_1$  from Example~\ref{ex:omq} to additionally describe 
  the hereditary nature of genetic defects:
  \[ \Tmc_2 := \Tmc_1 \cup \{\mn{Person} \sqcap \exists \mn{hasParent.GeneticRiskPatient} 
  \sqsubseteq \mn{GeneticRiskPatient}\}. \]
  The OMQ $Q_1' = (\Tmc_2, \Sigma_\mn{full}, q_1(x))$ with $q_1(x)$ as in 
  Example~\ref{ex:omq}, is not FO-rewritable, intuitively
  because it expresses unbounded reachability along the
  $\mn{hasParent}$ role.  In contrast, consider the OMQ $Q_2 =
  (\Tmc_2, \Sigma_\mn{full}, q_2(x))$ where $q_2(x) = \exists y\;
  \mn{GeneticRiskPatient}(x) \wedge \mn{hasDisease}(x,y) \wedge
  \mn{Albinism}(y)$. Even though $q_2$ is an extension of $q_1$ with
  additional atoms, $Q_2$ is FO-rewritable, with $\varphi(x) = q_2(x)
  \vee \big(\exists y\; \mn{Person}(x) \wedge \mn{hasDisease}(x,y)
  \wedge \mn{Albinism}(y)\big)$ a concrete rewriting.
  \myEndEx
\end{example}

We shall sometimes refer to the problem of \emph{(query) containment}
between two OMQs $Q_1 = (\Tmc_1, \Sigma, q_1)$ and $Q_2 = (\Tmc_2,
\Sigma, q_2)$; we say \emph{$Q_1$ is contained in $Q_2$} if $\Amc
\models Q_1(\abf)$ implies $\Amc \models Q_2(\abf)$ for all
$\Sigma$-ABoxes \Amc and $\abf \subseteq \mn{Ind}(\Amc)$. If both OMQs are
from (\EL, rCQ) and $\Tmc_1 = \Tmc_2 = \Tmc\!$, then we denote this with
$q_1 \subseteq_{\Tmc} q_2$.


\smallskip

%
We now introduce two more technical notions that are central to the
constructions in Section~\ref{sect:rCQs}. Both notions have been used
before in the context of ontology-mediated querying, see for example
\cite{ijcar08,Lutz-DL08}. They are illustrated in
Example~\ref{ex:splitting} below.
%
%
\begin{definition}[Fork rewriting]
  Let $q_0$ be a CQ. \emph{Obtaining a CQ $q$ from $q_0$ by fork
  elimination} means to select two atoms $r(x_0,y)$ and
  $r(x_1,y)$ with $y$ an existentially quantified variable,
  then to replace 
every occurrence of $x_{1-i}$ in $q$ with $x_i$, where $i \in \{0,1\}$
  is chosen such that $x_i$ is an answer variable if any of $x_0,x_1$
  is an answer variable, and to finally add the atom $x_i = x_{1-i}$ if
  $x_{1-i}$ is an answer variable. When $q$ can be obtained from $q_0$
  by repeated (but not necessarily exhaustive) fork elimination, then
  $q$ is a \emph{fork rewriting} of~$q_0$.
\end{definition}
For a CQ $q$ and $V \subseteq \mn{var}(q)$, we use $q|_V$ to denote
the restriction of $q$ to the variables in~$V$, that is, $q|_V$ is the
set of atoms in $q$ that use only variables
from~$V$.

\begin{definition}[Splitting] \label{def:splitting} Let \Tmc be an
  \EL-TBox, $q$ a CQ, and \Amc an ABox. A \emph{splitting} of $q$
  w.r.t.\ $\Amc$ and \Tmc is a tuple $\Pi = \langle
  R,S_1,\dotsc,S_\ell,r_1,\dots,r_\ell,\mu,\nu \rangle$, where
  $R,S_1,\dotsc,S_n$ is a partitioning of $\mn{var}(q)$,
  $r_1,\dots,r_\ell$ are role names, $\mu:\{1,\dotsc,\ell\}
  \rightarrow R$ assigns to each set $S_i$ a variable from $R$, $\nu:R
  \rightarrow \mn{Ind}(\Amc)$ assigns to each variable from $R$ and
  individual name from \Amc, and the following conditions are
  satisfied:
  \begin{enumerate}
  \item $\mn{avar}(q) \subseteq R$ and $x=y \in q$ implies $\nu(x) = \nu(y)$;
  \item if $r(x,y) \in q$ with $x,y \in R$, then $r(\nu(x),\nu(y)) \in \Amc$;
    \item $q|_{S_i}$ is tree-shaped and can thus be seen as an
      \EL-concept $C_{q|S_i}$, for $1\leq i\leq \ell$;
    \item if $r(x,x') \in q$ then either (i) $x,x'$ belong to the same set 
    $R,S_1,\dotsc,S_\ell$, or (ii) $x\in R$ and, for some $i$, $r=r_i$
    and $x'$ root of $q|_{S_i}$.
  \end{enumerate}
\end{definition}
The following lemma illustrates the combined use and raison d'\^etre
of both fork rewritings and splittings. A proof is standard and
omitted, see for example \cite{Lutz-DL08}. It does rely on the
existence of \emph{forest models} for ABoxes and \EL-TBoxes,
that is, for every ABox \Amc and TBox $\Tmc$\!, there is a model \Imc
whose shape is that of \Amc with a directed (potentially infinite) tree
attached to each individual. 
\begin{lemma}
\label{lem:splitting}
  Let $Q=(\Tmc\!,\Sigma,q_0)$ be an OMQ from $(\EL,\text{CQ})$, \Amc a
  $\Sigma$-ABox, and $\abf \subseteq \mn{Ind}(\Amc)$. Then $\Amc
  \models Q(\abf)$ iff there exists a fork rewriting $q$ of $q_0$ and
  a splitting $\langle R,S_1,\dotsc,S_\ell,r_1,\dots,r_\ell,\mu,\nu \rangle$ of $q$
  w.r.t.\ \Amc and~\Tmc such that the following conditions are
  satisfied:
  \begin{enumerate}

  \item $\nu(\xbf)=\abf$, $\xbf$ the answer variables of $q_0$;
  \item if $A(x) \in q$ and $x \in R$, then $\Amc,\Tmc \models A(\nu(x))$;
  \item $\Amc,\Tmc \models \exists r_i.C_{q|_{S_i}}(\nu(\mu(i)))$ for
    $1 \leq i \leq \ell$.

  \end{enumerate}
\end{lemma}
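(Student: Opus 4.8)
The plan is to reduce certain answers to homomorphisms into a single \emph{universal forest model} and then to read the splitting data off the shape of such a homomorphism. Concretely, I would fix the universal (a.k.a.\ canonical) model $\Imc_{\Tmc,\Amc}$ of $\Tmc$ and $\Amc$, which by the quoted existence of forest models can be chosen so that its domain is $\mn{Ind}(\Amc)$ together with a directed tree $T_a$ of anonymous elements dangling from each $a \in \mn{Ind}(\Amc)$, where every anonymous element has a unique predecessor reached via a single role name. This model enjoys the two standard properties on which everything rests: (a)~$\Amc \models Q(\abf)$ iff there is a homomorphism $h$ from $q_0$ into $\Imc_{\Tmc,\Amc}$ with $h(\xbf)=\abf$; and (b)~for every \EL-concept $C$ (equivalently, a tree-shaped CQ) and individual $a$, one has $a \in C^{\Imc_{\Tmc,\Amc}}$ iff $\Amc,\Tmc \models C(a)$, and in particular $a \in A^{\Imc_{\Tmc,\Amc}}$ iff $\Amc,\Tmc \models A(a)$. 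These two facts convert between the homomorphism world and the entailment conditions~(2) and~(3) of the lemma.

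For the ``if'' direction I would assemble a homomorphism from the splitting. Given a fork rewriting $q$ and a splitting $\langle R,S_1,\dots,S_\ell,r_1,\dots,r_\ell,\mu,\nu\rangle$ satisfying the lemma's conditions~(1)--(3), I first send each $x \in R$ to $\nu(x) \in \mn{Ind}(\Amc)$; splitting condition~(2) and lemma condition~(2), via property~(b), guarantee that every role and concept atom among $R$-variables is satisfied. For each $i$, lemma condition~(3) and property~(b) yield an $r_i$-successor of $\nu(\mu(i))$ in $T_{\nu(\mu(i))}$ satisfying $C_{q|_{S_i}}$, and since $q|_{S_i}$ is tree-shaped (splitting condition~(3)) this gives a homomorphism of $q|_{S_i}$ into that tree sending its root to the successor; splitting condition~(4) ensures these pieces glue consistently with the $R$-part. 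Composing yields a homomorphism $h$ from $q$ into $\Imc_{\Tmc,\Amc}$, and since $q$ is a fork rewriting of $q_0$, precomposing with the variable identifications performed by fork elimination produces a homomorphism of $q_0$ into $\Imc_{\Tmc,\Amc}$ (the added equality atoms are respected because of condition~(1) in the splitting definition). Lemma condition~(1) guarantees $h(\xbf)=\abf$, so property~(a) gives $\Amc \models Q(\abf)$.

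For the ``only if'' direction, property~(a) supplies a homomorphism $h:q_0 \to \Imc_{\Tmc,\Amc}$ with $h(\xbf)=\abf$, and I would carve the desired fork rewriting and splitting out of $h$ according to the forest structure. Whenever $q_0$ contains atoms $r(x_0,y),r(x_1,y)$ with $y$ quantified and $h(y)$ an anonymous tree node, that node has a unique predecessor, forcing $h(x_0)=h(x_1)$; applying fork elimination to such pairs and iterating produces a fork rewriting $q$ together with an induced homomorphism $h'$ under which no two distinct variables are sent to the same anonymous node along the same edge. I then let $R$ be the set of variables that $h'$ maps into $\mn{Ind}(\Amc)$, let $S_1,\dots,S_\ell$ be the connected components of the remaining tree-mapped variables, set $\nu = h'|_R$, and let $r_i$ and $\mu(i)$ record the role and the $R$-variable attaching the root of $S_i$ to the ABox part. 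Because distinct tree edges carry a single role and the folding removed duplicate preimages, each $q|_{S_i}$ is tree-shaped, giving splitting conditions~(3) and~(4); splitting conditions~(1)--(2) follow from $h'$ being a homomorphism into a model satisfying the standard names assumption. Finally, lemma conditions~(1)--(3) are read back off $h'$ using property~(b).

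The main obstacle is the bookkeeping of the fork rewriting in the ``only if'' direction: one must show that the identifications forced by the tree shape of the universal model are \emph{exactly} those realized by repeated fork elimination---including the correct placement of answer variables and the induced equality atoms---so that the resulting $q|_{S_i}$ are genuine tree-shaped CQs rather than merely tree-mapped sets of atoms. Everything else is a routine translation through properties~(a) and~(b).
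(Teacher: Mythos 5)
Your proposal is correct and follows exactly the route the paper has in mind: the paper omits this proof as standard (citing the forest-model argument from the referenced work on conjunctive query answering), and your reduction to homomorphisms into the universal forest model---reading the splitting off the homomorphism in one direction and assembling a homomorphism from the splitting in the other, with fork elimination absorbing the identifications forced by the unique-predecessor structure of the anonymous trees---is precisely that standard argument. The one point you flag as the main obstacle (that the identifications forced by the tree part are exactly those realizable by repeated fork elimination, including the treatment of answer variables via equality atoms) is indeed where the bookkeeping lives, and your handling of it is sound.
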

\begin{example} \label{ex:splitting}
  To illustrate the described notions, consider the following CQ.
  \begin{align*}
    q_3(x) = \exists y_1, y_2, z\; &\mn{Person}(x) \; \wedge \\
    &\mn{hasDisease}(x,y_1) \wedge \mn{MelaminDeficiency}(y_1) \wedge 
    \mn{causedBy}(y_1, z)\;\wedge \\ 
    &\mn{hasDisease}(x,y_2) \wedge \mn{ImpairedVision}(y_2) \wedge 
    \mn{causedBy}(y_2, z)\;\wedge \\
    &\mn{GeneDefect}(z)
  \end{align*}
  It asks for persons suffering from two conditions connected with the
  same gene defect. Let the ABox \Amc consist only of the assertion
  $\mn{OCA1aPatient}(a)$. We extend the TBox $\Tmc_2$ from
  Example~\ref{ex:fonoyes}, as follows:
  \begin{align*}
    \Tmc_3 := \Tmc_2 \cup \{ \ \
    \mn{OCA1aPatient} &\sqsubseteq \mn{Person} \sqcap 
    \mn{hasDisease.OCA1aAlbinism} \\
    \mn{OCA1aAlbinism} &\sqsubseteq \mn{ImpairedVision} \sqcap 
    \mn{MelaninDeficiency} \\
    \mn{OCA1aAlbinism} &\sqsubseteq \exists \mn{causedBy.GeneDefect}
    \ \}
  \end{align*}
  Let  $Q = (\Tmc_3, \Sigma_\mn{full}, q_3(x))$. It can be verified
  that $\Amc \models Q(a)$. By Lemma~\ref{lem:splitting}, this is
  witnessed by a fork rewriting and a splitting $\Pi$. The
  fork rewriting is
  \begin{align*}
    q_3'(x) = \exists y_1, z\; &\mn{Person}(x) \wedge \\
    &\mn{hasDisease}(x,y_1) \wedge \mn{MelaminDeficiency}(y_1) \wedge 
    \mn{ImpairedVision}(y_1) \;\wedge \\
    &\mn{causedBy}(y_1, z) \wedge \mn{GeneDefect}(z)
  \end{align*}
  The splitting $\Pi=\langle R,S_1,r_1,\mu,\nu \rangle$ of $q_3'$ wrt.\
  \Amc and $\Tmc_3$ is defined by setting
  \begin{align*}
    R = \{ x \}, \; S_1 = \{ y_1, z \}, \; r_1 = \mn{hasDisease}, \; \mu(1) = 
    x, \; \nu = (x \mapsto a)
  \end{align*}
  It can be verified that the conditions given in Lemma~\ref{lem:splitting}
  are satisfied.
  \myEndEx
\end{example}

\section{Tree-quantified CQs}
\label{sect:tqCQs}

We reduce FO-rewritability in $(\EL, \text{tqCQ})$ to FO-rewritability
in $(\EL,\text{AQ})$ and, making only very mild assumptions on the
algorithm used for solving the latter problem, show that rewritings of
the OMQs produced in the reduction can be transformed in a
straightforward way into rewritings of the original OMQ. The mild
assumptions are that the algorithm produces a tUCQ-rewriting and that,
informally, when constructing the tCQs of the tUCQ-rewriting it never
introduces atoms `without a reason'---this will be made precise
later. 

Let $Q=(\Tmc\!,\Sigma,q_0)$ be from $(\EL, \text{tqCQ})$. We can
assume w.l.o.g.\ that $q_0$ contains only answer variables: every tCQ
in $q$ with root $x$ can be represented as an \EL-concept $C$ and we
can replace the tree with the atom $A_C(x)$ (unless it has only a
single node) and extend $\Tmc$ with $C \sqsubseteq A_C$ where $A_C$ is
a fresh concept name that is not included in $\Sigma$. Clearly, the
resulting OMQ is equivalent to the original
one. 
%
%
%

Let $Q$ be an OMQ from $(\EL,\text{tqCQ})$.  We show how to construct
an OMQ $Q'=(\Tmc',\Sigma',q'_0)$ from $(\EL,\text{AQ})$ with the
announced properties; in particular, $Q$ is FO-rewritable if and only
if $Q'$ is. Let $\mn{CN}(\Tmc)$ and $\mn{RN}(\Tmc)$ denote the set of
concept names and role names that occur in $\Tmc$\!, and let $\mn{sub}_L$
denote the set of concepts that occur on the left-hand side of a concept
inclusion in $\Tmc$\!, closed under subconcepts. Reserve a fresh concept
name $A^x$ for every $A \in \mn{CN}(\Tmc)$ and $x \in \mn{avar}(q_0)$,
and a fresh role name $r^x$ for every $r \in \mn{RN}(\Tmc)$ and $x \in
\mn{avar}(q_0)$. Set 
%
\begin{align*}
  \Sigma' = \Sigma \; &\cup \; \{ A^x \mid A \in \mn{CN}(\Tmc) \cap \Sigma
  \text{ and } x \in \mn{avar}(q_0) \} \; \\[0.5mm]
  &\cup \; \{ r^x \mid r \in \mn{RN}(\Tmc) \cap \Sigma \text{ and } x \in 
  \mn{avar}(q_0) \}.
\end{align*}
Additionally reserve a concept name $A_{\exists r.E}^x$ for every
concept $\exists r . E \in \mn{sub}_L(\Tmc)$ and every $x \in
\mn{avar}(q_0)$. Define
\begin{align*}
 \Tmc' := \Tmc \; &\cup\;  \{ C^x_L \sqsubseteq D^x_R \mid 
x\in\mn{var}(q_0) \text{ and } C\sqsubseteq D\in\Tmc \} \\[1mm]
%
&\cup\; \{ \exists r^x.C \sqsubseteq A_{\exists r.C}^x \mid 
x\in\mn{var}(q_0) 
\text{ and } \exists r.C\in\mn{sub}_L(\Tmc) \} \\[1mm]
%
& \cup\; \{ C^y_L \sqsubseteq A_{\exists r.C}^x \mid r(x,y)\in q_0
\text{ and }
\exists r.C \in \mn{sub}_L(\Tmc) \} \\[1mm]
&\cup \; \{\bigsqcap_{A(x)\in q_0} A^x \sqsubseteq N \} 
\end{align*}
where for a concept $C = A_1 \sqcap \dotsb \sqcap A_n \sqcap \exists r_1.E_1 
\sqcap \dotsb \sqcap \exists r_m.E_m$, the concepts $C^x_L$ and $C^x_R$ are 
given by 
\begin{align*}
C^x_L &= A^x_1 \sqcap \dotsb \sqcap A^x_n \sqcap A_{\exists r_1.E_1}^x \sqcap 
\dotsb \sqcap A_{\exists r_m.E_m}^x \\
 C^x_R &= A^x_1 \sqcap \dotsb \sqcap A^x_n \sqcap \exists r^x_1.E_1 \sqcap 
\dotsb \sqcap \exists r^x_m.E_m 
\end{align*}
Moreover, set $q'_0 := N(x)$. 

\begin{example}
  Consider the OMQ $Q = (\Tmc_1, \Sigma_\mn{full}, q(x,y))$ with
  $\Tmc_1$ as in Example~\ref{ex:omq}
and let $q(x,y)$ the following
  tqCQ:\footnote{We only
    use here that $\Tmc_1$ contains the concept $\exists
    \mn{hasDisease}.\mn{HereditaryDisease}$ on the left-hand side of a
    concept
    inclusion.}
  \begin{align*}
    q(x,y) = \exists z \; &\mn{GeneticRiskPatient}(x) \wedge 
    \mn{hasDisease}(x,y) \;\wedge \\ &\mn{Disease}(y) \wedge 
    \mn{hasDisease}(x,z) \wedge \mn{Albinism}(z)
  \end{align*}
  We first remove quantified variables: all atoms that contain the variable
  $z$ are replaced by $A_{\exists \mn{hasDisease}.\mn{Albinism}}(y)$,
  and the TBox is extended with the inclusion $\exists \mn{hasDisease}.\mn{Albinism}
  \sqsubseteq A_{\exists \mn{hasDisease.Albinism}}$. We then construct
  $\Tmc'_1$, which we give here only partially. The final concept
  inclusion in $\Tmc_1$ is
  \[ \mn{GeneticRiskPatient}^x \sqcap \mn{Disease}^y \sqcap 
  A_{\exists \mn{hasDisease.Albinism}}^x \sqsubseteq N,\] 
  representing the updated query without role atoms; for example, the
  concept name $\mn{Disease}^y$ stands for the atom $\mn{Disease}(y)$.
  Among others, $\Tmc_1'$ contains the further concept inclusions
  $$
  \begin{array}{rcl}
  \exists \mn{hasDisease}^x.\mn{HereditaryDisease} &\sqsubseteq& A^x_{\exists
    \mn{hasDisease.HereditaryDisease}} \\[1mm]
  \mn{HereditaryDisease}^y &\sqsubseteq& A^x_{\exists
    \mn{hasDisease.HereditaryDisease}} 
  \end{array}
  $$
  where, intuitively, the lower concept inclusion captures that case
  that the truth of the concept $\exists
  \mn{hasDisease}.\mn{HereditaryDisease}$ is witnessed at $y$ (the
  role atom $\mn{hasDisease}(x,y)$ from $q$ is only implicit here)
  while the upper concept inclusion deals with other witnesses.
%
  %
  \myEndEx 
\end{example}

Before proving that the constructed OMQ $Q'$ behaves in the desired
way, we give some preliminaries. It is known that, if an OMQ from
$(\EL,\text{AQ})$ has an FO-rewriting, then it has a tUCQ-rewriting,
see for example \cite{IJCAI13,IJCAI15}.
%
A tCQ $q$ is
\emph{conformant} if it satisfies the following properties:
\begin{enumerate}

\item if $A(x)$ is a concept atom, then either $A$ is of the form
  $B^y$ and $x$ is the answer variable or $A$ is not of this form
  and $x$ is a quantified variable;

\item if $r(x,y)$ is a role atom, then either $r$ is of the form
  $s^z$ and $x$ is the answer variable or $r$ is not of this form
  and $x$ is a quantified variable.

\end{enumerate}
A \emph{conformant tUCQ} is then defined in the expected way.  The
notion of conformance captures what we informally described as never
introducing atoms into the rewriting `without a reason'.  By the
following lemma, FO-rewritability of the OMQs constructed in our
reduction implies conformant tUCQ-rewritability, that is, there is
indeed no reason to introduce any of the atoms that are forbidden in
conformant rewritings.
\begin{restatable}{lemma}{LEMredOnecorrect}\label{lem:red1-correct} 
  Let $Q$ be from $(\EL, \text{tqCQ})$ and $Q'$ the OMQ constructed from $Q$ 
  as above. If $Q'$ is FO-rewritable, then it is rewritable into a conformant
  tUCQ.
\end{restatable}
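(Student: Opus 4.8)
The plan is to exploit the rigid shape of $\Tmc'$ and to read conformant tree CQs directly off derivations of $N$, rather than trying to repair a given, possibly non-conformant rewriting. First I would invoke the fact quoted just above the lemma: since $Q'$ is an FO-rewritable OMQ from $(\EL,\text{AQ})$, it is rewritable into some finite tUCQ $\Phi = p_1 \vee \dots \vee p_n$ \cite{IJCAI13,IJCAI15}. I keep $\Phi$ only as a witness that a finite rewriting exists; the conformant rewriting itself is built from scratch, as the disjunction of \emph{all sound conformant tCQs}, and the work is to show this disjunction is complete and can be taken finite.

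The technical heart is a structural analysis of how $\Tmc'$ entails $N$ at an individual $a$ of a $\Sigma'$-ABox $\Amc'$. I would classify the concept inclusions of $\Tmc'$ by which symbols they consume and produce: the inclusions $C^x_L \sqsubseteq D^x_R$, $\exists r^x.C \sqsubseteq A^x_{\exists r.C}$ and $C^y_L \sqsubseteq A^x_{\exists r.C}$ all have left-hand sides mentioning only \emph{superscripted} concept names, and the only unsuperscripted material they create sits \emph{below} a fresh $r^x$-edge via the $\exists r^x_j.E_j$ conjuncts of $D^x_R$; the copy $\Tmc \subseteq \Tmc'$ works purely on unsuperscripted symbols; and $N$ is produced only by $\bigsqcap_{A(x)\in q_0} A^x \sqsubseteq N$. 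From this I would prove a \emph{locality} statement: there is a derivation of $N(a)$ (a minimal one) in which superscripted facts are used only \emph{at} $a$ and reach into the subtrees of $\Amc'$ only through assertions $r^x(a,b)$ feeding $\exists r^x.C \sqsubseteq A^x_{\exists r.C}$, everything below such a $b$ being derived by $\Tmc$ from unsuperscripted assertions alone. The crux is that the only upward-propagating inclusion is $\exists r^x.C \sqsubseteq A^x_{\exists r.C}$, which consumes an \emph{unsuperscripted} $C$ at the child and emits a superscripted conclusion at the parent, while $C^x_L \sqsubseteq D^x_R$ and $C^y_L \sqsubseteq A^x_{\exists r.C}$ are same-node rules; hence a superscripted fact placed at a non-root individual is never consumed and never needed. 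Establishing this confinement on the forest-model derivation, ruling out all indirect interactions between the superscripted and unsuperscripted layers, is the step I expect to be the main obstacle.

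With locality in hand, completeness follows by extraction. Given any $\Amc'$ with $\Amc',\Tmc'\models N(a)$, I trace a minimal derivation of $N(a)$ and collect the ABox assertions it uses; by locality their tree-unraveling $\mathcal{B}$ rooted at $a$ carries only superscripted concept/role assertions at the root and only unsuperscripted assertions strictly below it, i.e.\ it is precisely a conformant tree, and it maps homomorphically into $\Amc'$ with the root going to $a$. Reading $\mathcal{B}$ as a tCQ $p_{\mathcal{B}}$ with answer variable at the root, $p_{\mathcal{B}}$ is conformant, matches $\Amc'$ at $a$, and is \emph{sound} (i.e.\ $\mathcal{B},\Tmc'\models N$) because it was read off a genuine derivation, with $\Tmc'$ re-supplying the anonymous witnesses that the retained ABox triggers create. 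Soundness of the whole disjunction is then immediate, since every admitted conformant tCQ is individually sound; thus the disjunction of all sound conformant tCQs is a complete rewriting of $Q'$.

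It remains to make this disjunction finite, which is exactly where the existence of $\Phi$ enters. Here I would use the bounded-depth characterization of FO-rewritability for $(\EL,\text{AQ})$ from \cite{IJCAI13,IJCAI15}: FO-rewritability of $Q'$ yields a constant $d$ such that every entailment $\Amc',\Tmc'\models N(a)$ is already witnessed within distance $d$ of $a$. The extraction can then be confined to this neighbourhood, so it produces conformant tCQs of depth at most $d$; after pruning redundant successors (keeping one per conclusion needed, of which $\Tmc'$ admits only boundedly many), the out-degree is bounded as well, so only finitely many conformant tCQs arise up to equivalence. Their disjunction is the desired conformant tUCQ-rewriting. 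I would finally verify the two routine points that a rewriting may be assumed to use only $\Sigma'$-symbols (so that $\mathcal{B}$ is a legitimate $\Sigma'$-ABox) and that re-supplying anonymous witnesses from their ABox triggers preserves the derivation of $N$.
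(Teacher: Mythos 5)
Your proposal is essentially sound and rests on the same structural observation as the paper's proof---namely that, by the syntactic shape of $\Tmc'$, any derivation of $N(a)$ consumes superscripted symbols only at $a$ itself and unsuperscripted symbols only strictly below $a$, reached through $r^x$-edges. But the two arguments are organized quite differently. The paper takes the known finite tUCQ-rewriting, replaces each disjunct by a $\subseteq_{\Tmc'}$-minimal subquery, and then uses the derivation-tree analysis purely negatively: a non-conformant atom can never be used in a proof of $N$ at the root, so it could be dropped, contradicting minimality. This needs no finiteness argument beyond the one already packaged into tUCQ-rewritability. You instead build the rewriting from scratch as the disjunction of all sound conformant tCQs, prove completeness by extracting a conformant tree from a derivation in the unraveled ABox, and then must do extra work---the bounded-depth characterization of FO-rewritability plus an out-degree pruning---to make the disjunction finite. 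That extra machinery is available (the appendix invokes the analogous depth-bound fact for rCQs, and the AQ version is in the cited prior work), so your route goes through, but it is heavier; its payoff is that it exhibits the conformant rewriting constructively rather than as a pruning of an abstractly given one. One small care point: you should unravel $\Amc'$ into a tree \emph{before} tracing the derivation and collecting the used assertions. If the individual $a$ recurs at non-root positions of the unraveling, the superscripted assertions holding at $a$ in $\Amc'$ must not be copied to those positions (they are not used there anyway), or conformance of the extracted tCQ would fail.
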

When started on an OMQ produced by our reduction, the algorithms
presented in \cite{IJCAI15} and implemented in the Grind system
produce a conformant tUCQ-rewriting. Indeed, this can be expected of
any reasonable algorithm based on backwards chaining. Let $q'$ be a
conformant tUCQ-rewriting of $Q'$. The \emph{corresponding UCQ for
  $Q$} is the UCQ $q$ obtained by taking each CQ from $q'$, replacing
every atom $A^x(x_0)$ with $A(x)$ and every atom $r^x(x_0,y)$ with
$r(x,y)$, and adding all atoms $r(x,y)$ from $q_0$ such that both $x$
and $y$ are answer variables.  The answer variables in $q$ are those
of $q_0$. Observe that $q$ is a union of tqCQs.

\begin{restatable}{proposition}{PROPconformant}\label{prop:conformant}
  $Q$ is FO-rewritable iff $Q'$ is FO-rewritable. Moreover, if $q'$ is 
  a conformant tUCQ-rewriting of $Q'$ and $q$ the corresponding UCQ 
  for $Q$, then $q$ is a rewriting of~$Q$. 
\end{restatable}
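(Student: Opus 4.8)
The plan is to prove the proposition in two halves. The first half---the biconditional "$Q$ is FO-rewritable iff $Q'$ is FO-rewritable"---I would establish semantically, by exhibiting a tight correspondence between $\Sigma$-ABoxes for $Q$ and $\Sigma'$-ABoxes for $Q'$ that preserves certain answers. The idea is that the construction of $\Tmc'$ is engineered so that the superscripted copies $A^x, r^x$ of the vocabulary simulate, on a single distinguished individual, the behaviour that the original query demanded at the answer variable $x$. So I would define two translations between ABoxes: given a $\Sigma$-ABox $\Amc$ together with a candidate answer tuple $\abf$, build a $\Sigma'$-ABox that records, at one individual, for each answer variable $x$ and each relevant symbol, whether $\Amc,\Tmc$ entails the corresponding fact at $\nu(x)=\abf_x$; and conversely. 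The concept name $N$, defined by $\bigsqcap_{A(x)\in q_0} A^x \sqsubseteq N$ together with the $A^x_{\exists r.C}$ machinery, is designed to fire exactly when the splitting conditions of Lemma~\ref{lem:splitting} are met. Thus I would reduce the certain-answer condition $\Amc\models Q(\abf)$, via Lemma~\ref{lem:splitting} (fork rewritings plus splittings), to the condition that $N$ holds at the distinguished individual of the translated ABox, i.e.\ to $\Amc'\models Q'$. Since FO-rewritability is preserved under such a polynomial, ABox-level interreducibility, both directions of the biconditional follow.

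For the second half---that the corresponding UCQ $q$ is an actual rewriting of $Q$ whenever $q'$ is a conformant tUCQ-rewriting of $Q'$---I would argue directly that for every $\Sigma$-ABox $\Amc$ and tuple $\abf$, we have $\Amc\models Q(\abf)$ iff $\Amc\models q(\abf)$. I would unfold both sides through the same splitting machinery. On one side, $\Amc\models Q(\abf)$ is witnessed, by Lemma~\ref{lem:splitting}, by a fork rewriting and a splitting whose "root" part lives on the answer variables and whose tree parts $q|_{S_i}$ are certified by entailments $\Amc,\Tmc\models\exists r_i.C_{q|_{S_i}}(\nu(\mu(i)))$. On the other side, each conformant tCQ in $q'$ has, by the conformance conditions, a very rigid shape: at the single answer variable it carries only superscripted concept/role atoms $B^y, s^z$, and all non-superscripted atoms hang below in genuine trees. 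Translating back via the corresponding-UCQ recipe turns each superscripted atom $A^x(x_0)$ into the root atom $A(x)$ and each $r^x(x_0,y)$ into a root-to-tree edge $r(x,y)$, while the untouched tree part becomes exactly the \EL-concept whose entailment the splitting demands. So I would match, tCQ by tCQ, the disjuncts of $q$ with the witnessing splittings, using conformance to guarantee that no spurious atoms appear and that the reconstructed tqCQ is sound.

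Concretely I would structure the second half as a chain: $\Amc\models q(\abf)$ iff some disjunct (a tqCQ) of $q$ maps into the canonical/forest model of $\Amc,\Tmc$ with the answer variables sent to $\abf$; iff, pushing the tree parts into the tree portion of the forest model and reading off the root homomorphism, the corresponding splitting conditions of Lemma~\ref{lem:splitting} hold; iff $\Amc\models Q(\abf)$. The forest-model property quoted before Lemma~\ref{lem:splitting} is what licenses decomposing a homomorphism into a root part and independent tree parts, and it is what makes fork rewritings necessary (two tree parts may be forced to share a variable). The role of Lemma~\ref{lem:red1-correct} here is to ensure this argument is not vacuous: since $Q'$ being FO-rewritable guarantees a conformant tUCQ-rewriting exists, the hypothesis "$q'$ is a conformant tUCQ-rewriting" is always attainable, so the moreover-clause has content in every FO-rewritable case.

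The main obstacle I anticipate is the interaction between fork rewritings and the reduction's use of a single answer-variable encoding. Forks identify distinct answer (or quantified) variables, and after identification the splitting may place several original tree parts under one individual; I must check that the corresponding-UCQ reconstruction still produces a \emph{tqCQ} (rather than a CQ with illegal cross-edges among the trees) and that equality atoms introduced by fork elimination on answer variables are faithfully tracked. In particular the delicate direction is completeness of $q$: given a witnessing fork rewriting and splitting for $\Amc\models Q(\abf)$, I must exhibit a conformant tCQ in $q'$ that the translated ABox satisfies, which means tracing how the entailments in conditions~(2) and~(3) of Lemma~\ref{lem:splitting} force the concept $N$ to hold and hence force some disjunct of the backwards-chaining rewriting $q'$ to be present. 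Verifying that conformance is exactly the invariant that survives this round trip---neither too weak (admitting unsound reconstructions) nor too strong (excluding a needed disjunct)---is where the real care lies.
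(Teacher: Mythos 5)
Your second half (the ``moreover'' clause) is in the right spirit---the paper also establishes that direction by relating satisfaction of $q$ on a $\Sigma$-ABox to satisfaction of $q'$ on a translated $\Sigma'$-ABox, though it does so via an unraveling of \Amc into an infinite tree-shaped ABox followed by a purely syntactic renaming (plus compactness), rather than by matching disjuncts against splittings directly. But your first half has a genuine gap. The principle you invoke---``FO-rewritability is preserved under such a polynomial, ABox-level interreducibility''---is not valid for the translation you describe: you propose to build the $\Sigma'$-ABox by recording which facts $\Amc,\Tmc$ \emph{entails} at the answer individuals, and entailment under an \EL-TBox is in general not FO-definable (that is the very phenomenon the paper studies), so composing an FO-rewriting of $Q'$ with this translation does not yield an FO formula over $\Sigma$. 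Even with a purely syntactic translation one must still worry about $\Sigma'$-ABoxes that use the primed symbols $A^x, r^x$ in unintended ways (at several individuals, off the designated root), on which a rewriting of $Q'$ must also be correct. The paper avoids all of this by never arguing generically: it constructs an explicit candidate rewriting in \emph{each} direction and verifies it.

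In particular, your proposal never explains how to pass from ``$Q$ is FO-rewritable'' to a rewriting of $Q'$. The paper's proof of this direction hinges on Lemma~\ref{lem:french}: every FO-rewritable OMQ from $(\EL,\text{tqCQ})$ has a UCQ-rewriting in which every CQ is a \emph{derivative} of $q_0$, and only such structurally constrained rewritings can be translated, atom by atom, into a tUCQ-rewriting of $Q'$. Without that lemma---which is itself nontrivial and is obtained in the paper from the correctness of the $\mn{bc}_{\text{rCQ}}$ algorithm---there is no handle on the shape of an arbitrary FO-rewriting of $Q$, and the forward direction does not go through; your sketch does not mention derivatives or Lemma~\ref{lem:french} at all. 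A smaller point: in your chain of equivalences, ``$\Amc\models q(\abf)$ iff some disjunct of $q$ maps into the canonical/forest model'' is off---$q$ is an FO query evaluated on \Amc itself, not on its canonical model; the forest model enters only when unfolding the certain-answer condition $\Amc\models Q(\abf)$.
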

The proof strategy is to establish the `moreover' part and to
additionally show how certain UCQ-rewritings of $Q$ can be converted
into UCQ-rewritings of $Q'$. More precisely, a CQ $q$ is a
\emph{derivative} of $q_0$ if it results from $q_0$ by exchanging
atoms $A(x)$ for \EL-concepts $C$, seen as tree-shaped CQs rooted in
$x$. We are going to prove the following lemma in
Section~\ref{sect:rCQs}.
\begin{lemma}
\label{lem:french}
If an OMQ $(\Tmc\!,\Sigma,q_0)$ from $(\EL,\text{tqCQ})$ is
FO-rewritable, then it has a UCQ-rewriting in which each CQ is a
derivative of $q_0$.
\end{lemma}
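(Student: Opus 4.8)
The plan is to combine the splitting characterisation of certain answers (Lemma~\ref{lem:splitting}) with the standard backwards-chaining description of AQ-entailment in \EL, and then to select finitely many derivatives with the help of an already existing UCQ-rewriting. Throughout, I assume as in Section~\ref{sect:tqCQs} that $q_0$ contains only answer variables; the tree parts of the original tqCQ have then been collapsed into concept atoms $A_C(x)$ on answer variables, and a derivative of the normalised query translates back into a derivative of the original one (choosing the tree $C$ to replace $A_C(x)$ recovers the original subtree). So it suffices to treat the normalised case.

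First I would specialise Lemma~\ref{lem:splitting} to this situation. Since every quantified variable of a tqCQ has a unique parent in its tree, no two atoms share the same quantified target, so no fork elimination is possible and the only fork rewriting of $q_0$ is $q_0$ itself; and since every variable of $q_0$ is an answer variable while $\mn{avar}(q_0)\subseteq R$, the tree parts $S_1,\dots,S_\ell$ are empty. Lemma~\ref{lem:splitting} thus collapses to: $\Amc\models Q(\abf)$ iff there is a map $\nu:\mn{var}(q_0)\to\mn{Ind}(\Amc)$ with $\nu(\xbf)=\abf$ such that $r(\nu(x),\nu(y))\in\Amc$ for every role atom $r(x,y)\in q_0$ and $\Amc,\Tmc\models A(\nu(x))$ for every concept atom $A(x)\in q_0$. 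Next I would invoke the standard fact for \EL that, over a $\Sigma$-ABox, $\Amc,\Tmc\models A(a)$ holds iff there is a tree-shaped CQ $C$ over $\Sigma$ with $\Tmc\models C\sqsubseteq A$ (reading $C$ as an \EL-concept) that maps homomorphically into $\Amc$ with its root sent to $a$ (this is exactly soundness and completeness of backwards chaining for atomic queries, see \cite{IJCAI13,IJCAI15}). Replacing each concept atom $A(x)$ of $q_0$ by such a witness $C$ yields, by construction, a derivative of $q_0$; and since $\Tmc\models C\sqsubseteq A$ for each replacement, the characterisation above immediately shows that any derivative obtained this way is \emph{sound}, i.e.\ its classical matches are certain answers of $Q$.

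It remains to select finitely many derivatives that are jointly complete, and here I would use FO-rewritability. By the quoted equivalence of FO- and UCQ-rewritability for $(\EL,\text{CQ})$, fix a UCQ-rewriting $\bigvee_j p_j$ of $Q$ using only $\Sigma$-symbols. For each disjunct $p_j$, view it as a $\Sigma$-ABox $\Amc_{p_j}$; the identity map witnesses $\Amc_{p_j}\models p_j$, hence $\Amc_{p_j}\models Q(\abf_{p_j})$, where $\abf_{p_j}$ are the answer variables of $p_j$ read as individuals. Applying the characterisation and the tree-witness fact \emph{inside the finite ABox} $\Amc_{p_j}$ produces a derivative $q_j^\ast$ together with a match of $q_j^\ast$ into $\Amc_{p_j}$ sending the answer variables to $\abf_{p_j}$. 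I claim $\Psi:=\bigvee_j q_j^\ast$ is the desired rewriting. Soundness is the previous paragraph. For completeness, suppose $\Amc\models Q(\abf)$; then $\Amc\models p_j(\abf)$ for some $j$ via a homomorphism $h$ with $h(\abf_{p_j})=\abf$, and composing the fixed match of $q_j^\ast$ into $\Amc_{p_j}$ with $h$ (read as a map $\mn{Ind}(\Amc_{p_j})\to\mn{Ind}(\Amc)$) gives a match of $q_j^\ast$ into $\Amc$ sending answer variables to $\abf$, so $\Amc\models\Psi(\abf)$.

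The step I expect to be the main obstacle is this last completeness argument. The tree witnesses underlying the derivatives can a priori be arbitrarily deep, and it is tempting to look for an explicit depth bound in terms of the rewriting. The point of the canonical-ABox construction is to sidestep any such bound: each $q_j^\ast$ is extracted from a single finite disjunct $p_j$ and is therefore automatically finite, while the homomorphism $h:p_j\to\Amc$ transports the witness found in $\Amc_{p_j}$ into the arbitrary input ABox. The remaining care is bookkeeping over the signature $\Sigma$ (ensuring the witnesses, and hence the derivatives, are $\Sigma$-queries) and, in the non-normalised reading, checking that collapsing and re-expanding the tree parts of $q_0$ respects the notion of derivative.
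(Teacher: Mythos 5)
Your core argument is correct, but it takes a genuinely different route from the paper. The paper obtains Lemma~\ref{lem:french} as a by-product of the correctness of the backwards-chaining algorithm $\mn{bc}_{\text{rCQ}}$ (Lemma~\ref{lem:soundcompl} in the appendix): that proof requires the full chase-sequence induction for soundness/completeness and, for termination, the boundedness characterization of FO-rewritability via pseudo ditree ABoxes from \cite{IJCAI16}, after which one observes that every query ever generated by $\mn{bc}_{\text{rCQ}}$ on a (normalized) tqCQ is a derivative of $q_0$. You instead characterize $Q$ directly as the union of all \emph{sound} derivatives, by degenerating Lemma~\ref{lem:splitting} (no fork rewritings, no tree parts) and invoking the standard tree-witness characterization of AQ entailment in \EL, and then you extract one derivative per disjunct of an assumed UCQ-rewriting via the canonical-database argument, transporting the witness along the homomorphism $p_j \to \Amc$. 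This is more elementary and self-contained: it needs only FO-~$\Rightarrow$~UCQ-rewritability and facts the paper already states, and it avoids having to bound the set of all sound derivatives. What the paper's route buys in exchange is the stronger algorithmic statement (that $\mn{bc}_{\text{rCQ}}$ itself outputs such a rewriting), which is needed for Proposition~\ref{lem:red2-correct} anyway.

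Two caveats. First, your reduction to the normalized case overclaims: a derivative of the normalized query may replace $A_C(x)$ by a tree that does not structurally extend $C$ (take $\Tmc \ni P \sqsubseteq \exists r.B$ and $q_0(x)=\exists y\,(A(x)\wedge r(x,y)\wedge B(y))$; the rewriting must contain $A(x)\wedge P(x)$, which is not a derivative of $q_0$ since it lacks the role atom $r(x,y)$). So the lemma is actually false for non-normalized tqCQs, and the back-translation you sketch does not work. This is harmless here because the paper only ever applies the lemma after the normalization of Section~\ref{sect:tqCQs} — but you should state the lemma for normalized $q_0$ rather than claim the general case follows. Second, disjuncts $p_j$ of the assumed UCQ-rewriting may contain equality atoms, in which case the identity map does not witness $\Amc_{p_j}\models p_j(\abf_{p_j})$; you need to form $\Amc_{p_j}$ by first identifying equated answer variables (the match $h$ then factors through this quotient). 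Both points are easily repaired and do not affect the substance of the argument.
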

Let $q$ be a UCQ in which every CQ is a derivative of $q_0$. Then the
\emph{corresponding UCQ for $Q'$} is the UCQ $q'$ obtained by taking
each CQ from $q$, replacing every atom $A(x)$, $x$ answer variable,
with $A^x(x_0)$, every atom $r(x,y)$, $x$ answer variable and $y$
quantified variable, with $r^x(x_0,y)$, and deleting all atoms
$r(x_1,x_2)$, $x_1,x_2$ answer variables. The answer variable in $q'$
is $x_0$. Note that $q'$ is a tUCQ. To establish the ``only if''
direction of Proposition~\ref{prop:conformant}, we show that when $q$
is a UCQ-rewriting of $Q$ in which every CQ is a derivative of the
query $q_0$, then the corresponding UCQ for $Q'$ is a rewriting of
$Q'$.

\section{Rooted CQs}
\label{sect:rCQs}

We consider OMQs based on rCQs, a strict generalization of tqCQs. In
this case, we are not going to achieve a `black box' reduction, but
rely on a concrete algorithm for solving FO-rewritability in
$(\EL,\text{AQ})$. This algorithm is a straightforward and not
necessarily terminating backwards chaining algorithm or a
(potentially terminating) refinement thereof, as implemented in the
Grind system. We show how to combine the construction of (several)
OMQs from $(\EL,\text{AQ})$ with a modification of the assumed
algorithm to decide FO-rewritability in $(\EL,\text{rCQ})$ and to
construct actual
rewritings.  

We start with introducing the straightforward backwards chaining
algorithm mentioned above which we refer to as
$\mn{bc}_\text{AQ}$. Central to $\mn{bc}_\text{AQ}$ is a backwards 
chaining step based on concept inclusions 
in the TBox used in the OMQ. 
Let $C$ and $D$ be \EL-concepts, $E \sqsubseteq F$ a concept
inclusion, and $x \in \mn{var}(C)$ (where $C$ is viewed as a
tree-shaped CQ). Then $D$ is \emph{obtained from $C$ by applying $E
  \sqsubseteq F$ at $x$} if $D$ can be obtained from $C$ by
\begin{itemize}

\item removing $A(x)$ for all concept names $A$ with $\models F
  \sqsubseteq A$;

 \item removing $r(x,y)$ and the tree-shaped CQ $G$ rooted at $y$ when 
   $\models F \sqsubseteq \exists r . G$;


\item adding $A(x)$ for all concept names $A$ that occur in $E$ as a
  top-level conjunct (that is, that are not nested inside existential
  restrictions);

\item adding $\exists r . G$ as a CQ with root $x$, for each
  $\exists r.G$ that is a top-level conjunct of~$E$.

\end{itemize}
%
%
Let $C$ and $D$ be \EL-concepts. We write $D \prec C$ if $D$ can
be obtained from $C$ by removing an existential restriction (not
necessarily on top level, and potentially resulting in $D=\top$ when
$C$ is of the form $\exists r . E$). We use $\prec^\ast$ to denote the
reflexive and transitive closure of~$\prec$ and say that \emph{$D$ is
  $\prec$-minimal with $\Tmc \models D \sqsubseteq A_0$} if $\Tmc
\models D \sqsubseteq A_0$ and there is no $D' \prec D$ with $\Tmc
\models D' \sqsubseteq A_0$.

Now we are in the position to describe algorithm
$\mn{bc}_\text{AQ}$. It maintains a set $M$ of \EL-concepts that
represent tCQs. Let $Q = (\Tmc\!, \Sigma, A_0)$ be from $(\EL,
\text{AQ})$. Starting from the set $M = \{ A_0 \}$, it exhaustively
performs the following steps:
\begin{enumerate}
\item find $C \in M$, $x \in \mn{var}(C)$, a concept
  inclusion 
  $E \sqsubseteq F \in \Tmc$, and $D$, such that $D$ is obtained from
  $C$ by applying $E \sqsubseteq F$ at $x$;
  \item find $D' \prec^\ast D$ that is $\prec$-minimal with $\Tmc 
  \models D' \sqsubseteq A_0$, and add $D'$ to $M$.
\end{enumerate}
Application of these steps might not terminate. 
We use
$\mn{bc}_\text{AQ}(Q)$ to denote the potentially infinitary UCQ
$\bigvee M|_\Sigma$ where $M$ is the set obtained in the limit and
$q|_\Sigma$ denotes the restriction of the UCQ $q$ to those disjuncts
that only use symbols from
$\Sigma$.  Note that, in Point~2, it is possible to
 find the desired $D'$ in polynomial time since the subsumption `$\Tmc 
  \models D' \sqsubseteq A_0$' can be decided in polynomial time.
The following is standard to prove, see 
\cite{TheFrench,IJCAI15} and Lemma~\ref{lem:soundcompl} below for
similar results.
\begin{lemma}
\label{lem:bcAQ}
Let $Q$ be an OMQ from $(\EL,\text{AQ})$. If $\mn{bc}_\text{AQ}(Q)$ is
finite, then it is a UCQ-rewriting of $Q$.  Otherwise, $Q$ is
not FO-rewritable.  
\end{lemma}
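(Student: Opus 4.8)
The plan is to establish two facts about the limit set $M$, from which the dichotomy follows at once. Write $\psi := \mn{bc}_\text{AQ}(Q) = \bigvee M|_\Sigma$ and let $A_0$ be the query concept name. The first fact is that $\psi$ is always a correct, but possibly infinitary, UCQ-rewriting: for every $\Sigma$-ABox \Amc and every $a \in \mn{Ind}(\Amc)$ we have $\Amc \models Q(a)$ iff $\Amc \models \psi(a)$. The second fact is that FO-rewritability of $Q$ forces $M|_\Sigma$ to contain only finitely many non-equivalent concepts. Granting both, the lemma is immediate: if $\psi$ is finite it is a genuine finite UCQ-rewriting by the first fact, and if it is infinite then $Q$ is not FO-rewritable by the contrapositive of the second.

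For the first fact, soundness rests on the invariant that $\Tmc \models C \sqsubseteq A_0$ for every $C \in M$. This holds by induction on the construction of $M$: it is true for the seed $A_0$, and every concept added in Step~2 is chosen to be $\prec$-minimal with $\Tmc \models D' \sqsubseteq A_0$. Hence if some $C \in M|_\Sigma$ maps homomorphically into \Amc with its root sent to $a$, then $a \in C^\Imc$ in every model \Imc of \Tmc and \Amc, so $a \in A_0^\Imc$, giving $\Amc \models Q(a)$. The completeness direction is the crux. Here I would invoke the forest model property: fix a forest model \Imc of \Tmc and \Amc and suppose $a \in A_0^\Imc$. One traces a finite proof of $a \in A_0^\Imc$, whose inclusion applications $E \sqsubseteq F$ are witnessed either by ABox edges between \Sigma-individuals or by \Tmc-generated tree successors. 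By induction on the number of inclusion applications I would show that such a proof unravels into a tree-shaped concept $C$ that maps into \Amc at $a$ (its leaves bottom out in genuine \Sigma-assertions, so $C$ uses only \Sigma-symbols) and that $C$, or a $\prec$-minimal $\prec^\ast$-reduct of it still entailing $A_0$, is exactly a concept generated by $\mn{bc}_\text{AQ}$; the backwards chaining step is designed precisely to invert one forward inclusion application, replacing a \Tmc-derived part $\exists r.G$ by the top-level conjuncts of the triggering $E$. This is the analogue of the fully worked-out argument in Lemma~\ref{lem:soundcompl}.

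For the second fact, assume $Q$ is FO-rewritable. Then it has a tUCQ-rewriting $\bigvee_i p_i$ with each $p_i$ a tCQ over \Sigma, and since each $p_i$ is itself a rewriting disjunct, reading it as an \EL-concept gives $\Tmc \models p_i \sqsubseteq A_0$. Take any $C \in M|_\Sigma$ and view it as the \Sigma-ABox rooted at its root; by soundness this ABox satisfies $Q$ at the root, so some $p_i$ maps homomorphically into it with root sent to root. As both $p_i$ and $C$ are directed trees rooted at matching vertices, the image of $p_i$ is a connected subtree $C'$ of $C$ containing the root, so $C' \prec^\ast C$, and the homomorphism $p_i \to C'$ yields $\Tmc \models C' \sqsubseteq A_0$. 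The $\prec$-minimality of $C$ then forces $C' = C$, so $C$ is the homomorphic image of $p_i$ and $|C| \leq |p_i|$. Thus every concept in $M|_\Sigma$ has size at most $\max_i |p_i|$; since $M$ only uses symbols from the finite signature of \Tmc, there are finitely many such concepts up to equivalence, and $M|_\Sigma$ is finite.

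I expect the completeness half of the first fact to be the main obstacle, as it requires carefully matching the forward derivation of $A_0$ in a forest model against the single-inclusion backwards chaining steps, and checking that the $\prec$-minimization in Step~2 neither discards a needed pattern nor retains a concept failing to map into the \Sigma-ABox. The soundness invariant and the size-bounding argument are comparatively routine.
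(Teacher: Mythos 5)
The paper itself offers no proof of this lemma---it is declared ``standard to prove'' with a pointer to Lemma~\ref{lem:soundcompl}, the analogous statement for $\mn{bc}_{\text{rCQ}}$---so your proposal has to be measured against that appendix proof. Your first fact (soundness via the invariant $\Tmc \models C \sqsubseteq A_0$ for all $C \in M$, completeness by inverting a forward chase derivation one inclusion application at a time and re-minimizing) is exactly the strategy of part~1 of Lemma~\ref{lem:soundcompl} specialized to trees; you leave the completeness induction as a sketch, but that is the part the paper also only gestures at for the AQ case. Your second fact genuinely diverges: the paper's part~2 invokes the characterization from \cite{IJCAI16} of FO-rewritability via pseudo ditree $\Sigma$-ABoxes of bounded depth and argues that each $\prec$-minimal query in the limit set has bounded outdegree and hence bounded depth, whereas you start from tUCQ-rewritability and bound the number of variables of each $C \in M|_\Sigma$ directly by the size of the largest disjunct $p_i$, exploiting that a root-preserving homomorphism $p_i \to \Amc_C$ between directed trees has an ancestor-closed image node set $V$, so that $C|_V \prec^\ast C$ and $\Tmc \models C|_V \sqsubseteq A_0$, and $\prec$-minimality forces $V = \mn{var}(C)$. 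This is more elementary and self-contained for the AQ case (it is the tree shape that makes the ancestor-closure step work; it would not transfer to rCQs with a nontrivial core, which is why the paper needs the ditree machinery in Lemma~\ref{lem:soundcompl}). One small repair: the homomorphic image of $p_i$ is not literally a $\prec^\ast$-reduct of $C$, since $\prec$ removes only existential restrictions while the image may omit concept atoms of $C$ at retained nodes; you should pass to the restriction $C|_V$ of $C$ to the image node set, which bounds $|\mn{var}(C)|$ (rather than the atom count of $C$) by $|p_i|$---finiteness of $M|_\Sigma$ still follows because all symbols occurring in $M$ come from the finite signature of $\Tmc$ and $A_0$.
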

\begin{example}
  Consider the TBox
  $$
    \Tmc = \{\mn{Person} \sqcap \exists \mn{hasParent.GeneticRiskPatient}     
    \sqsubseteq \mn{GeneticRiskPatient} \}
    $$
  and let $Q = (\Tmc\!, \Sigma,\mn{GeneticRiskPatient}(x))$ with $\Sigma =
  \{\mn{Person},\mn{GeneticRiskPatient}\}$. Note that the role name
  $\mn{hasParent}$ does not occur in $\Sigma$. Even though the set $M$
  generated by $\mn{bc}_\text{AQ}$ (in the limit of its non-terminating
  run) is infinite, $\mn{bc}_\text{AQ}(Q) =
  \mn{GeneticRiskPatient}(x)$ is finite and a UCQ-rewriting of $Q$.
  \myEndEx
\end{example}

The algorithm for deciding FO-rewritability in $(\EL,\text{AQ})$
presented in \cite{IJCAI15} and underlying the Grind system can be
seen as a refinement of $\mn{bc}_\text{AQ}$. Indeed, that algorithm
always terminates and returns $\bigvee M|_\Sigma$ if that UCQ is
finite and reports non-FO-rewritability otherwise. Moreover, the
UCQ-rewriting is represented in a decomposed way and output as a
non-recursive Datalog program for efficiency and succinctness.  For
our purposes, the only important aspect is that, when started on an
FO-rewritable OMQ, it computes (a non-recursive Datalog program that
is equivalent to) the UCQ-rewriting $\bigvee M|_\Sigma$.

\smallskip

We next introduce a generalized version $\mn{bc}^+_{\text{AQ}}$ of
$\mn{bc}_{\text{AQ}}$ that takes as input an OMQ
$Q=(\Tmc\!,\Sigma,A_0)$ from $(\EL, \text{AQ})$ and an
additional \EL-TBox $\Tmc^{\mn{min}}$, such that termination and
output of $\mn{bc}^+_{\text{AQ}}$ agrees with that of
$\mn{bc}_{\text{AQ}}$ when the input satisfies $\Tmc^{\mn{min}}=\Tmc$.
Starting from $M = \{ A_0 \}$, algorithm $\mn{bc}^+_{\text{AQ}}$ 
exhaustively performs the following steps:
\begin{enumerate}
\item find $C \in M$, $x \in \mn{var}(C)$, a concept
  inclusion 
  $E \sqsubseteq F \in \Tmc$, and $D$, such that $D$ is obtained from
  $C$ by applying $E \sqsubseteq F$ at $x$;
  \item find $D' \prec^\ast D$ that is $\prec$-minimal with 
  $\Tmc^{\mn{min}} \models D' \sqsubseteq A_0$, and add $D'$ to $M$.
\end{enumerate}
We use $\mn{bc}^+_\text{AQ}(Q, \Tmc^{\mn{min}})$ to denote the potentially 
infinitary UCQ $\bigvee M|_\Sigma$, $M$ obtained in the limit. Note that
$\mn{bc}^+_{\text{AQ}}$ uses the TBox \Tmc for backwards chaining and
$\Tmc^{\mn{min}}$ for minimization while $\mn{bc}_{\text{AQ}}$ uses
\Tmc for both purposes.  The refined version of $\mn{bc}_{\text{AQ}}$
implemented in the Grind system can easily be adapted  to behave
like a terminating version of $\mn{bc}^+_{\text{AQ}}$.

\medskip

Our aim is to convert an OMQ $Q=(\Tmc\!,\Sigma,q_0)$ from
$(\EL,\text{rCQ})$ into a set of pairs $(Q',\Tmc^{\mn{min}})$ with
$Q'$ an OMQ from $(\EL,\text{AQ})$ and $\Tmc^{\mn{min}}$ an \EL-TBox
such that $Q$ is FO-rewritable iff
$\mn{bc}^+_{\text{AQ}}(Q',\Tmc^{\mn{min}})$ terminates for all pairs
$(Q',\Tmc^{\mn{min}})$ and, moreover, if this is the case, then the
resulting UCQ-rewritings can straightforwardly be converted into a
rewriting of $Q$.

Let $Q=(\Tmc\!,\Sigma,q_0)$. We construct one pair
$(Q_{q_r},\Tmc_{q_r}^{\mn{min}})$ for each fork rewriting $q_r$ of
$q_0$. 
We use $\mn{core}(q_r)$ to denote the minimal set $V$ of variables
that contains all answer variables in $q_r$ and such that after
removing all atoms $r(x,y)$ with $x,y \in V$, we obtain a disjoint
union of tree-shaped CQs. We call these CQs the \emph{trees in $q_r$}.
%
Intuitively, we separate the tree-shaped parts of $q_r$ from the
cyclic part, the latter identified by $\mn{core}(q_r)$. This is
similar to the definition of tqCQs where, however, cycles cannot
involve any quantified variables.  In a forest model of an ABox and a
TBox as mentioned before Lemma~\ref{lem:splitting}, the variables in
$\mn{core}(q_r)$ must be mapped to the ABox part of the model (rather
than to the trees attached to it).
%
Now $(Q_{q_r},\Tmc_{{q_r}}^{\mn{min}})$ is defined by setting
$Q_{q_r}=(\Tmc_{q_r},\Sigma_{q_r},N(x))$ and
\begin{align*}
\Tmc_{q_r} = \Tmc \; &\cup\; \{ C^x_R \sqsubseteq D^x_R \mid 
x\in\text{core}({q_r}), C\sqsubseteq D\in\Tmc \} \\[1mm]
&\cup \; \{ \bigsqcap_{C(x) \text{ a tree in } q_r} C_R^x \sqsubseteq N \}
\end{align*}
where $C^x_R$ is defined as in Section~\ref{sect:tqCQs},
%
and $\Sigma_{q_r}$ is the extension of $\Sigma$ with all concept names
$A^x$ and role names $r^x$ used in $\Tmc_{q_r}$ such that $A,r \in
\Sigma$. 

It remains to define $\Tmc_{{q_r}}^{\mn{min}}$, which is
$\Tmc_{q_r}$ extended with one concept inclusion for each fork
rewriting $q$ of $q_0$ and each splitting $\Pi = \langle
R,S_1,\dotsc,S_\ell,r_1,\dots,r_\ell,\mu,\nu \rangle$ of $q$ w.r.t.\
$\Amc_{q_r}$, as follows. For each $x \in \mn{avar}({q_r})$, the
equality atoms in ${q_r}$ give rise to an equivalence class
$[x]_{q_r}$ of answer variables, defined in the expected
way. 
We only consider the splitting $\Pi$ of $q$ 
if it preserves answer variables modulo equality, that is, if $x \in
\mn{avar}(q)$, then there is a $y \in [x]_{q_r}$ such that $\nu(x)=y$.
We then add the inclusion
%
%
$$
\Big( \bigsqcap_{\substack{A(x)\in q \\ \text{ with }
      x\in R}} A^{\nu(x)} \Big) \; \sqcap \;
  \Big(\bigsqcap_{1\leq i\leq\ell} \exists
  r_i^{\nu(\mu(i))}.C_{q|_{S_i}} \Big) \;\; \sqsubseteq \;\; N 
$$
It can be shown that, summing up over all fork rewritings and
splittings, only polynomially many concepts $\exists
r_i^{\nu(\mu(i))}.  C_{q|_{S_i}}$ are introduced (this is similar to
the proof of Lemma~6 in
\cite{Lutz-DL08}). Note that we do not introduce fresh concept
names of the form $A^x_{\exists r . C}$ as in
Section~\ref{sect:tqCQs}. This is not necessary here because of the use
of fork rewritings and splittings in $\Tmc^{\mn{min}}$.


\begin{example} \label{ex:rcq} Consider query $q_3$ from
  Example~\ref{ex:splitting} and TBox $\Tmc_1$ from
  Example~\ref{ex:omq}. Constructing $\Tmc_{q_3}$ (thus considering
  $q_3$ as a fork rewriting of itself) would add concept inclusions
  like
  \[ \mn{Person}^x \sqcap \exists \mn{hasDisease}^x.\mn{HereditaryDisease} 
  \sqsubseteq 
  \mn{GeneticRiskPatient}^x
  \]
  The final concept inclusion added is the following, listing concepts needed
  at $x, y_1, y_2,$ and $z$ that result in a match of $q_3$:
  \[ \mn{Person}^x \sqcap \mn{MelaminDeficiency}^{y_1} \sqcap 
  \mn{ImpairedVision}^{y_2} \sqcap \mn{GeneDefect}^z \sqsubseteq N
  \]
  When building the TBox $\Tmc^\mn{min}_{q_3}$, it is necessary to
  look for matches of $q_3$ by a splitting $\Pi$ of a fork rewriting
  of $q_3$ w.r.t.\ $\Amc_{q_3}$ and $\Tmc_1$. We consider here the
  splitting $\Pi=\langle R,S_1,r_1,\mu,\nu \rangle$ of the fork
  rewriting $q_3'$ of $q_3$ given in Example~\ref{ex:splitting},
  defined by setting
  \begin{align*}
    R = \{ x \}, \; S_1 = \{ y_1, z \}, \; r_1 = \mn{hasDisease}, \; \mu(1) = 
    x, \; \nu = (x \mapsto x)
  \end{align*}
  For $\Pi$, the following concept inclusion   is added to $\Tmc^\mn{min}_{q_3}$:
  \begin{align*}
    \mn{Person}^x \sqcap \exists \mn{hasDisease}^{x}.\big(\mn{MelaminDeficiency} 
    \sqcap \mn{ImpairedVision} \; \sqcap \quad & \\ 
    \mn{causedBy.GeneDefect}\big) &\sqsubseteq N  \ghost{~~~~\myEndEx}
  \end{align*}
\end{example}
It can be seen that when
$\mn{bc}^+_{\text{AQ}}(Q_{q_r},\Tmc^{\mn{min}}_{q_r})$ is finite, then
it is a conformant tUCQ in the sense of
Section~\ref{sect:tqCQs}. Thus, we can also define a
\emph{corresponding UCQ $q$ for $Q$} as in that section, that is, $q$
is obtained by taking each CQ from $q'$, replacing every atom
$A^x(x_0)$ with $A(x)$ and every atom $r^x(x_0,y)$ with $r(x,y)$, and
adding all atoms $r(x,y)$ from ${q_r}$ such that $x,y \in
\mn{core}({q_r})$.  The answer variables in $q$ are those of
$q_0$. 
%
\begin{restatable}{proposition}{LEMredTwocorrect}\label{lem:red2-correct}
  Let $Q=(\Tmc\!,\Sigma,q_0)$ be an OMQ from $(\EL,\text{rCQ})$. If
  $\mn{bc}^+_{\text{AQ}}(Q_{q_r},$ $\Tmc_{q_r}^{\mn{min}})$ is finite
  for all fork rewritings ${q_r}$ of $q_0$, then $\bigvee_{q_r}
  \widehat q_{q_r}$ is a UCQ-rewriting of $Q$, where $\widehat
  q_{q_r}$ is the UCQ for $Q$ that corresponds to
  $\mn{bc}^+_{\text{AQ}}(Q_{q_r},\Tmc_{q_r}^{\mn{min}})$. Otherwise,
  $Q$ is not FO-rewritable.
\end{restatable}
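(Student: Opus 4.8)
The plan is to prove the stronger biconditional that $Q$ is FO-rewritable if and only if $\mn{bc}^+_{\text{AQ}}(Q_{q_r},\Tmc^{\mn{min}}_{q_r})$ is finite for every fork rewriting $q_r$ of $q_0$, and to show that in the finite case the decoded UCQ $\bigvee_{q_r}\widehat q_{q_r}$ is a rewriting of $Q$. The backbone is a soundness-and-completeness statement for $\mn{bc}^+_{\text{AQ}}$ in the spirit of Lemma~\ref{lem:bcAQ} (namely Lemma~\ref{lem:soundcompl}): backwards chaining in step~1 uses $\Tmc_{q_r}$ and is therefore sound for genuine entailment, while minimization in step~2 uses $\Tmc^{\mn{min}}_{q_r}$, whose additional inclusions $(\ldots)\sqsubseteq N$ encode exactly the fork rewritings and splittings of $q_0$. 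First I would record that these extra inclusions are themselves sound: whenever such an inclusion fires at an individual of a superscript-encoding ABox, Lemma~\ref{lem:splitting} already guarantees a certain answer of $Q$ at the corresponding tuple. Hence minimizing with $\Tmc^{\mn{min}}_{q_r}$ never introduces an unsound disjunct, and every concept the algorithm adds, once decoded, is a sound disjunct for $Q$.

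For the finite case I would establish soundness and completeness of $\bigvee_{q_r}\widehat q_{q_r}$ separately. Soundness follows from the previous observation together with the fact that $\mn{bc}^+_{\text{AQ}}(Q_{q_r},\Tmc^{\mn{min}}_{q_r})$ is a conformant tUCQ in the sense of Section~\ref{sect:tqCQs}, so its disjuncts decode to derivatives of $q_r$ whose matches, via the inclusion through which they were derived, yield a splitting of the shape required by Lemma~\ref{lem:splitting}. For completeness I would start from $\Amc\models Q(\abf)$ and invoke Lemma~\ref{lem:splitting} to obtain a fork rewriting $q$ of $q_0$ and a splitting $\Pi=\langle R,S_1,\dots,S_\ell,r_1,\dots,r_\ell,\mu,\nu\rangle$; I would choose $q_r$ so that $\mn{core}(q_r)=R$ realizes the cyclic part under $\nu$, build the superscript-encoding $\Sigma_{q_r}$-ABox in which each assertion at $\nu(x)$ is replicated with superscript $x$ for $x\in\mn{core}(q_r)$, and then argue that the splitting inclusion for $q$ and $\Pi$ in $\Tmc^{\mn{min}}_{q_r}$, together with the shifted inclusions $C^x_R\sqsubseteq D^x_R$ that replay ordinary reasoning in \EL at each core position, lets $\mn{bc}^+_{\text{AQ}}$ backwards-chain from $N$ to a disjunct that decodes to a match of $\widehat q_{q_r}$ on $\Amc$ at $\abf$. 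The two subsumption obligations (conditions~2 and~3 of Lemma~\ref{lem:splitting}) are discharged by completeness of the chaining steps for the AQ-entailments $\Amc,\Tmc\models A(\nu(x))$ and, at the tree parts, for the concept-entailments $\Amc,\Tmc\models\exists r_i.C_{q|_{S_i}}(\nu(\mu(i)))$.

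For the ``otherwise'' direction I would argue contrapositively: if $Q$ is FO-rewritable, then every run terminates. Since FO-rewritability coincides with UCQ-rewritability for $(\EL,\text{CQ})$, $Q$ has a finite UCQ-rewriting, and an rCQ-analogue of Lemma~\ref{lem:french} lets me assume each of its CQs is obtained from $q_r$ by replacing answer-variable atoms with tree-shaped \EL-concepts. This caps the depth and width of the tree parts that can survive minimization against $\Tmc^{\mn{min}}_{q_r}$, so for each $q_r$ only finitely many pairwise $\prec$-incomparable concepts are ever added and $\mn{bc}^+_{\text{AQ}}(Q_{q_r},\Tmc^{\mn{min}}_{q_r})$ stops. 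Conversely, when some run is infinite, its infinitely many $\prec$-minimal concepts yield, after decoding, a family of $\Sigma$-ABoxes that witness certain answers of $Q$ along arbitrarily long tree branches and that no finite FO formula can separate from non-answers, by the unravelling/locality argument used for $(\EL,\text{AQ})$ in \cite{IJCAI13}; hence $Q$ is not FO-rewritable.

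The hard part will be the two-TBox interplay in the completeness argument: one must show that chaining with $\Tmc_{q_r}$ but minimizing with the strictly larger $\Tmc^{\mn{min}}_{q_r}$ neither loses a required disjunct nor produces an unsound one, which hinges on the extra inclusions of $\Tmc^{\mn{min}}_{q_r}$ being in exact correspondence with splittings and hence semantically conservative over the encoded matches. A secondary, more bureaucratic obstacle is keeping the superscript encoding, the choice of $q_r$ realizing the core $R$, and the decoding of conformant disjuncts into derivatives of $q_r$ mutually consistent, so that a match of $\widehat q_{q_r}$ on $\Amc$ corresponds precisely to the splitting supplied by Lemma~\ref{lem:splitting}.
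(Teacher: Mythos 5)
Your overall strategy differs from the paper's: the paper does not prove soundness and completeness of $\bigvee_{q_r}\widehat q_{q_r}$ directly. Instead it introduces an intermediate backwards-chaining algorithm $\mn{bc}_{\text{rCQ}}$ operating on rCQs, proves that algorithm sound and complete via an induction on chase sequences (Lemma~\ref{lem:soundcompl}), and then establishes a step-by-step simulation $R_p=\{\pi(q')\mid q'\in M_p\}$ between its run and the runs of $\mn{bc}^+_{\text{AQ}}$ on the pairs $(Q_{q_r},\Tmc^{\mn{min}}_{q_r})$, supported by a lemma (Lemma~\ref{lem:minlem}) showing that minimization under $\Tmc^{\mn{min}}_{q_r}$ and minimization under $\subseteq_\Tmc$ commute with the encoding. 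A direct proof is not impossible in principle, but as sketched yours has a genuine gap in the completeness direction. From $\Amc\models Q(\abf)$, Lemma~\ref{lem:splitting} only hands you obligations of the form $\Amc,\Tmc\models A(\nu(x))$ and $\Amc,\Tmc\models\exists r_i.C_{q|_{S_i}}(\nu(\mu(i)))$ --- entailments modulo $\Tmc$, not matches into $\Amc$. You propose to discharge these ``by completeness of the chaining steps for the AQ-entailments'', but that is precisely the statement to be proved: the core atoms and the trees share the variables $\nu(\mu(i))$, so they cannot be rewritten independently as separate AQs; one needs a single induction on a chase sequence witnessing $\mn{ch}_\Tmc(\Amc)\models q_0(\abf)$ that tracks the whole query at once. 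In that induction a chase step can create forks in the homomorphism image (two core variables mapped to the same fresh individual), at which point the current fork rewriting $q_r$ no longer suffices and one must switch to a different, further-collapsed fork rewriting --- this is the property $(*)$ and the surrounding case analysis in the paper's proof of Lemma~\ref{lem:soundcompl}, and your sketch does not anticipate it at all.

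Two further concrete problems. First, you propose to ``choose $q_r$ so that $\mn{core}(q_r)=R$''; this identification is not generally available, since $R$ in a splitting may properly contain the core (variables sitting in tree parts of $q_r$ can still be mapped to ABox individuals), which is exactly why $\Tmc^{\mn{min}}_{q_r}$ quantifies over \emph{all} fork rewritings and splittings rather than a canonical one. Second, in the termination argument, a finite UCQ-rewriting consisting of derivatives of the $q_r$ does not by itself ``cap the depth'' of the trees the algorithm generates; you need the additional step that every $\prec$-minimal query surviving Step~2 embeds a disjunct of the rewriting in a way that meets every subtree (otherwise a subtree could be dropped, contradicting minimality), or, as the paper does, the boundedness characterization of FO-rewritability over pseudo-ditree ABoxes from \cite{IJCAI16}. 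The ingredients you name are the right ones, but the argument as written does not yet connect them.
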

To prove Proposition~\ref{lem:red2-correct}, we introduce a backwards
chaining algorithm $\mn{bc}_{\text{rCQ}}$ for computing UCQ-rewritings
of OMQs from $(\EL,\text{rCQ})$ that we refer to as
$\mn{bc}_{\text{rCQ}}$. In a sense, $\mn{bc}_{\text{rCQ}}$ is the
natural generalization of $\mn{bc}_{\text{AQ}}$ to rCQs. We then show
a correspondence between the run of $\mn{bc}_{\text{rCQ}}$ on the 
input OMQ $Q$ from $(\EL,\text{rCQ})$ and the runs of
$\mn{bc}^+_{\text{AQ}}$ on the constructed inputs of the form 
$(Q_{q_r},\Tmc_{q_r}^{\mn{min}})$.

\smallskip

On the way, we also provide the missing proof for
Lemma~\ref{lem:french}, which in fact is a consequence of the
correctness of $\mn{bc}_{\text{rCQ}}$ (states as
Lemma~\ref{lem:soundcompl} in the appendix) and the observation that,
when $Q=(\Tmc\!,\Sigma,q_0)$ is from $(\EL,\text{tqCQ})$, then
$\mn{bc}_{\text{rCQ}}(Q)$ contains only derivatives of $q_0$. The
latter is due to the definition of the $\mn{bc}_{\text{rCQ}}$
algorithm, which starts with a set of minimized fork rewritings of
$q_0$, and the fact that the only fork rewriting of a tqCQ is the
query itself.

\medskip

There are two exponential blowups in the presented approach. First,
the number of fork rewritings of $q_0$ might be exponential in the
size of $q_0$. We expect this not to be a problem in practice since
the number of fork rewritings of realistic queries should be fairly
small. And second, the number of splittings can be exponential and
thus the same is true for the size of each $\Tmc_{q_r}^{\mn{min}}$. We expect
that also this blowup will be moderate in practice. Moreover, in an
optimized implementation one would not represent $\Tmc_{q_r}^{\mn{min}}$ as
a TBox, but rather check the existence of fork rewritings and
splittings that give rise to concept inclusions in $\Tmc_{q_r}^{\mn{min}}$
in a more direct way. This involves checking whether
concepts of the form $\exists r_i^{\nu(\mu(i))}.  C_{q'|_{S_i}}$ are
derived, and the fact that there are only polynomially many different
such concepts should thus be very relevant regarding performance.

\section{Experiments}

We have extended the \emph{Grind} system \cite{IJCAI15} to support
OMQs from $(\EL,\text{tqCQ})$ and $(\EL,\text{rCQ})$ instead of only
from $(\EL,\text{AQ})$, 
and conducted experiments with real-world ontologies and hand-crafted conjunctive 
queries.
%
%
The system can be downloaded from 
\url{http://www.cs.uni-bremen.de/~hansen/grind}, together with the 
ontologies and 
queries, and is released under GPL.  
It outputs rewritings in the form of non-recursive Datalog
queries.  We have implemented the following optimization: given
$Q=(\Tmc\!,\Sigma,q_0)$, first compute all fork rewritings of $q_0$,
rewrite away all variables outside of the core (in the same way in which
tree parts of the query are removed in Section~\ref{sect:tqCQs}) to
obtain a new OMQ $(\Tmc',\Sigma,q'_0)$, and then test for each atom
$A(x) \in q'_0$ whether $(\Tmc',\Sigma,A(x))$ is FO-rewritable. It can
be shown that, if this is the case, then $Q$ is FO-rewritable, and it
is also possible to transfer the actual
rewritings. If this check fails, we go through the full construction 
described in the paper. 

Experiments were carried out on a Linux (3.2.0) machine with a 3.5~GHz
quad-core processor and 8~GB of RAM. For the experiments, we use (the
\EL part of) the ontologies ENVO, FBbi, SO, MOHSE, and not-galen.
The first three ontologies are from the biology domain, and are available 
through Bioportal\footnote{\url{https://bioportal.bioontology.org}}. MOHSE and 
not-galen are different versions of the GALEN 
ontology\footnote{\url{http://www.opengalen.org/}}, which describes medical 
terms.
Some statistics is given in Table~\ref{tab:tboxes}, namely the number
of concept inclusions (CI), concept names (CN), and role names (RN)
in each ontology. For each ontology, we hand-crafted 10 conjunctive queries
(three tqCQs and seven rCQs), varying in size from 2 to 5 variables
and showing several different topologies (see Fig.~\ref{fig:queries}
for a sample).

\begin{table}[t]
\centering 
\begin{tabular}
{|@{\;}l@{\;}|@{\;}r@{\;}|@{\;}r@{\;}|@{\;}r@{\;}|@{\;}r@{\;}|@{\;}r@{\;}|@{\;}r@{\;}|@{\;}r@{\;}|@{\;}r@{\;}|}
  \hline
  TBox      &   CI &   CN &  RN & Min CQ & Avg CQ & Max CQ &   Avg AQ &
  Aborts 
  \TBstrut \\
  \hline
  ENVO      & 1942 & 1558 &   7 &   0.2s &   1.5s &     7s &       1s & 0 
  \Tstrut\\
  FBbi      &  567 &  517 &   1 &  0.05s &   0.5s &     3s &     0.3s & 0
  \Tstrut\\
  MOHSE     & 3665 & 2203 &  71 &     2s &    10s &    40s &       6s & 0
  \Tstrut\\ 
  not-galen & 4636 & 2748 & 159 &     6s &     9s &    28s &      25s & 2
  \Tstrut\\
  SO        & 3160 & 2095 &  12 &     1s &    19s &  2m23s &       4s & 1
  \Tstrut\\
  \hline
\end{tabular}
\vspace{3mm}
\caption{TBox information and results of experiments
}
\label{tab:tboxes}
 \vspace{-3mm}
\end{table}

\begin{figure}[t]
  \begin{boxedminipage}{\linewidth}
  { \footnotesize
  \begin{align*}
    ~\\[-8mm]
    q_1(x,y) = &\; \mn{Patient}(x) \wedge \mn{shows}(x,y) \wedge    
      \mn{Endocarditis}(y) \\
    q_2(w,x,y,z) = &\; \mn{Doctor}(w) \wedge \mn{hasPersonPerforming}(x,w) 
      \wedge \mn{Surgery}(x) \; \wedge \\
      &\; \mn{actsOn}(x,y) \wedge \mn{Tissue}(y) \wedge \mn{actsOn}(x,z) \; 
      \wedge \\
      &\; \mn{InternalOrgan}(z) \wedge \mn{hasAlphaConnection}(y,z) \\
    q_7(x) = &\; \exists y,z \; \mn{Protein}(x) \wedge \mn{contains}(x,y)   
      \wedge \mn{Tetracycline}(y) \; \wedge \\ 
      &\; \mn{InternalOrgan}(z) \wedge \mn{isActedOnSpecificallyBy}(z,y) \\
    q_8(x) = &\; \exists v,w,y,z \; \mn{Sulphonamide}(v) \wedge   
      \mn{serves}(v,w) \wedge \mn{TumorMarkerRole}(w) \; \wedge \\
      &\; \mn{NamedEnzyme}(x) \wedge \mn{serves}(x,w) \wedge \mn{actsOn}(x,z) 
      \wedge \mn{Liver}(z) \; \wedge \\
      &\; \mn{TeichoicAcid}(y) \wedge 
      \mn{actsOn}(y,z) \\
    q_{10}(x) = &\; \exists y,z \; \mn{BodyStructure}(x) \wedge 
    \mn{isBetaConnectionOf}(x,y)   
      \wedge \mn{Brain}(y) \; \wedge \\ 
      &\; \mn{IntrinsicallyNormalBodyStructure}(z) \wedge 
      \mn{isBetaConnectionOf}(z,y) \\[-5mm]
  \end{align*}
  }%
  \end{boxedminipage}
  \caption{Examplary queries used for experiments with TBox not-galen.}
  \label{fig:queries}
\end{figure}

The runtimes are reported in Table~\ref{tab:tboxes}. Only three
queries did not terminate in 30 minutes or exhausted the memory. For
the successful ones, we list fastest (Min CQ), slowest (Max CQ), and
average runtime (Avg CQ). For comparison, the Avg AQ column lists the
time needed to compute FO-rewritings for all queries
$(\Tmc\!,\Sigma,A(x))$ with $A(x)$ an atom in $q_0$. This check is of
course incomplete for FO-rewritability of $Q$, but can be viewed as a
lower bound. 
A detailed picture of individual runtimes is given in
Figure~\ref{fig:querytimes}.

In summary, we believe that the outcome of our experiments is
promising. While runtimes are higher than in the AQ case, they are
still rather small given that we are dealing with an intricate static
analysis task and that many parts of our system have not been
seriously optimized. The queries with long runtimes or timeouts
contain AQs that are not FO-rewritable which forces the decomposed
algorithm implemented in Grind to enter a more expensive processing
phase.

\begin{figure}[t]
\centering
\begin{tikzpicture}
  \pgfplotsset{compat = 1.3} 
  \begin{semilogyaxis}
  [ 
    ybar, 
    ymin=0.04, 
    log origin=infty, 
    log ticks with fixed point, 
    width=\textwidth,
     xtick style={draw=none},
    height=55mm, 
    bar width=1.3mm, 
    ylabel={time in s}, 
    ylabel shift = -1mm,
    legend pos=north west,
    legend style={
    },
    symbolic x coords={
      envo01, envo02, envo03, envo04, envo05, 
      envo06, envo07, envo08, envo09, envo10,
      fbbi01, fbbi02, fbbi03, fbbi04, fbbi05, 
      fbbi06, fbbi07, fbbi08, fbbi09, fbbi10, 
      mohse01, mohse02, mohse03, mohse04, mohse05, 
      mohse06, mohse07, mohse08, mohse09, mohse10, 
      ng01, ng02, ng03, ng04, ng05, 
      ng06, ng07, ng08, ng09, ng10,
      so01, so02, so03, so04, so05, so06, so07, so08, so09, so10
    }, 
    xtick={
      envo01, envo02, envo03, envo04, envo05, 
      envo06, envo07, envo08, envo09, envo10,
      fbbi01, fbbi02, fbbi03, fbbi04, fbbi05, 
      fbbi06, fbbi07, fbbi08, fbbi09, fbbi10, 
      mohse01, mohse02, mohse03, mohse04, mohse05, 
      mohse06, mohse07, mohse08, mohse09, mohse10, 
      ng01, ng02, ng03, ng04, ng05, 
      ng06, ng07, ng08, ng09, ng10,
      so01, so02, so03, so04, so05, so06, so07, so08, so09, so10
    },
    xticklabels={
      ~,~,~,ENVO,~,~,~,~,~,~,
      ~,~,~,FBbi,~,~,~,~,~,~,
      ~,~,~,~,MOHSE,~,~,~,~,~,
      ~,~,~,~,~,not-galen,~,~,~,~,
      ~,~,~,~,~,~,SO,~,~,~,
    },
    ] 

    \addplot +[black,fill=white!80!blue, area legend] coordinates {
(envo01, 0.825)
(envo02, 0.383)
(envo03, 0.288)
(envo04, 0.253)
(envo05, 0.215)
(envo06, 0.196)
(envo07, 0.607)
(envo08, 0.273)
(envo09, 6.535)
(envo10, 5.627)
    };

    \addplot +[black,fill=white!80!blue, area legend] coordinates {
(fbbi01, 0.058)
(fbbi02, 0.072)
(fbbi03, 0.079)
(fbbi04, 0.095)
(fbbi05, 0.074)
(fbbi06, 0.054)
(fbbi07, 0.195)
(fbbi08, 0.964)
(fbbi09, 0.743)
(fbbi10, 2.905)
    };

    \addplot +[black,fill=white!80!blue, area legend] coordinates {
(mohse01, 1.946)
(mohse02, 2.175)
(mohse03, 1.773)
(mohse04, 2.17)
(mohse05, 2.248)
(mohse06, 2.227)
(mohse07, 2.186)
(mohse08, 7.828)
(mohse09, 36.727)
(mohse10, 39.041)
    };
    
    \addplot +[black,fill=white!80!blue, area legend] coordinates {
(ng01, 6.394)
(ng02, 5.9)
(ng03, 6.587)
(ng04, 6.852)
(ng05, 6.447)
(ng06, 6.261)
(ng07, 6.856)
(ng08, 28.442)
    };

    \addplot +[black,fill=white!80!blue, area legend] coordinates {
(so01, 1.044)
(so02, 2.57)
(so03, 0.883)
(so04, 16.87)
(so05, 1.005)
(so06, 1.323)
(so07, 2.053)
(so08, 4.173)
(so09, 143.816)
    };

  \addplot[black,sharp plot,update limits=false] coordinates {(envo01,5) 
  (so09,5)} node[above] at (axis cs:so03,5) {5 s};

  \addplot[black,sharp plot,update limits=false] coordinates {(envo01,45) 
  (so09,45)} node[above] at (axis cs:so03,45) {45 s};

  \legend{Total runtime per query} 
  \end{semilogyaxis}
\end{tikzpicture}
\vspace{-2mm}
\caption{Runtimes for individual OMQs, showing only non-aborting runs.}
\label{fig:querytimes}
\end{figure}
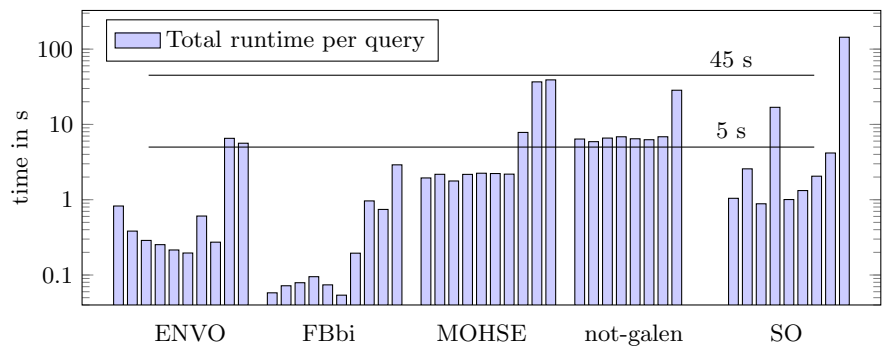

\section{Conclusion}

We remark that our approach can also be used to compute FO-rewritings
of OMQs from $(\EL,\text{CQ})$ even if the CQs are not rooted, as long
as they are not Boolean (that is, as long as they contain at least one
answer variable) and an algorithm for query containment in
$(\EL,\text{CQ})$ is also available. This follows from (a minor
variation of) an observation from \cite{IJCAI16}: FO-rewritability of
non-Boolean OMQs from $(\EL,\text{CQ})$ can be polynomially reduced to
a combination of containment in $(\EL,\text{CQ})$ and FO-rewritability
in $(\EL,\text{rCQ})$. As future work, it would be interesting to
extend our approach to UCQs, to the extension of \EL with role
hierarchies and domain and range restrictions, or even to the
extension \ELI of \EL with inverse roles.





\medskip
\noindent {\bf Acknowledgements.} We acknowledge support by
ERC grant 647289 `CODA'.


\bibliographystyle{splncs03}


\begin{thebibliography}{10}
\providecommand{\url}[1]{\texttt{#1}}
\providecommand{\urlprefix}{URL }

\bibitem{DLLiteJAIR09}
Artale, A., Calvanese, D., Kontchakov, R., Zakharyaschev, M.: The {DL-Lite}
  family and relations. J.\ Artif.\ Intell.\ Res.\ 36, pp.\ 1--69 (2009)

\bibitem{DL-Textbook}
Baader, F., Horrocks, I., Lutz, C., Sattler, U.: An introduction to description
  logics. Cambride University Press (2017)

\bibitem{PierisBarcelo17}
Barcel{\'{o}}, P., Berger, G., Pieris, A.: Containment for rule-based 
  ontology-mediated queries (19 April 2017), available at 
  \url{https://arxiv.org/abs/1703.07994} [cs.DB]

\bibitem{OurTODS14}
Bienvenu, M., ten Cate, B., Lutz, C., Wolter, F.: Ontology-based data access: A
  study through disjunctive datalog, {CSP}, and {MMSNP}. J.\ ACM Trans.\
  Database Syst.\ 39(4), pp.\ 33:1--33:44 (2014)

\bibitem{IJCAI16}
Bienvenu, M., Hansen, P., Lutz, C., Wolter, F.: First order-rewritability and
  containment of conjunctive queries in {Horn} description logics. In:
  Proc.\ of IJCAI, pp. 965--971 (2016)

\bibitem{IJCAI13}
Bienvenu, M., Lutz, C., Wolter, F.: First order-rewritability of atomic queries
  in {Horn} description logics. In: Proc.\ of IJCAI, pp.\ 754--760 (2013)

\bibitem{DLLiteJAR07}
Calvanese, D., {De Giacomo}, G., Lembo, D., Lenzerini, M., Rosati, R.:
  Tractable reasoning and efficient query answering in description logics: The
  {DL-Lite} family. J. Autom. Reasoning  39(3), pp.\ 385--429 (2007)

\bibitem{Backchase99}
Deutsch, A., Popa, L., Tannen, V.: Physical data independence, constraints, and
  optimization with universal plans. In: Proc.\ of VLDB, pp. 459--470 (1999)

\bibitem{EiterOSTX12}
Eiter, T., Ortiz, M., Simkus, M., Tran, T., Xiao, G.: Query rewriting for
  {Horn-SHIQ} plus rules. In: Proc.\ of AAAI (2012)

\bibitem{OurICDT17}
Feier, C., Lutz, C., Kuusisto, A.: Rewritability in monadic disjunctive
  datalog, {MMSNP}, and expressive description logics. In: Proc.\ of ICDT 
  (2017)

\bibitem{GottlobKKPSZ14}
Gottlob, G., Kikot, S., Kontchakov, R., Podolskii, V.V., Schwentick, T.,
  Zakharyaschev, M.: The price of query rewriting in ontology-based data
  access. J.\ Artif.\ Intell.\ 213, pp.\ 42--59 (2014)

\bibitem{IJCAI15}
Hansen, P., Lutz, C., Seylan, I., Wolter, F.: Efficient query rewriting in the
  description logic {EL} and beyond. In: Proc.\ of IJCAI, pp.\ 3034--3040 
  (2015)

\bibitem{KikotKZ12}
Kikot, S., Kontchakov, R., Zakharyaschev, M.: Conjunctive query answering with
  {OWL} 2 {QL}. In: Proc.\ of KR (2012)

\bibitem{KonigLM15}
K{\"{o}}nig, M., Lecl{\`{e}}re, M., Mugnier, M.: Query rewriting for
  existential rules with compiled preorder. In: Proc.\ of IJCAI, pp.\
  3106--3112 (2015)

\bibitem{TheFrench}
K{\"{o}}nig, M., Lecl{\`{e}}re, M., Mugnier, M., Thomazo, M.: Sound, complete
  and minimal {UCQ}-rewriting for existential rules. Semantic Web 6(5), pp.\
  451--475 (2015)

\bibitem{ijcar08}
Lutz, C.: The complexity of conjunctive query answering in expressive
  description logics. In: Proc.\ of IJCAR, pp. 179--193 (2008)

\bibitem{Lutz-DL08}
Lutz, C.: Two upper bounds for conjunctive query answering in {SHIQ}. In:
  Proc.\ of DL (2008)

\bibitem{LutzTomanWolterIJCAI09}
Lutz, C., Toman, D., Wolter, F.: Conjunctive query answering in the 
  description logic {EL} using a relational database system. In: Proc.\ of 
  IJCAI, pp.\ 2070--2075 (2009)

\bibitem{lutz-2012}
Lutz, C., Wolter, F.: Non-uniform data complexity of query answering in
  description logics. In: Proc.\ of KR (2012)

\bibitem{owlprofiles}
Motik, B., Cuenca Grau, B., Horrocks, I., Wu, Z., Fokoue, A., Lutz, C.: OWL~2 
  Web Ontology Language: Profiles. W3C recommendation (11 December 2012), 
  available at \url{http://www.w3.org/TR/owl2-profiles/}

\bibitem{Perez-UrbinaMH10}
P{\'{e}}rez{-}Urbina, H., Motik, B., Horrocks, I.: Tractable query answering
  and rewriting under description logic constraints. J. Applied Logic 8(2),
  pp.\ 186--209 (2010)

\bibitem{Rodriguez-MuroC12}
Rodriguez{-}Muro, M., Calvanese, D.: High performance query answering over
  DL-Lite ontologies. In: Proc.\ of KR (2012)

\bibitem{RosatiA10}
Rosati, R., Almatelli, A.: Improving query answering over DL-Lite ontologies.
  In: Proc.\ of KR (2010)

\bibitem{StefanoniEtAl}
Stefanoni, G., Motik, B.: Answering conjunctive queries over {EL} knowledge 
  bases with transitive and reflexive roles. In: Proc.\ of AAAI, pp.\ 
  1611--1617 (2015)

\bibitem{StefanoniMH12}
Stefanoni, G., Motik, B., Horrocks, I.: Small datalog query rewritings for
  {EL}. In: Proc.\ of DL (2012)

\bibitem{TrivelaSCS15}
Trivela, D., Stoilos, G., Chortaras, A., Stamou, G.B.: Optimising
  resolution-based rewriting algorithms for {OWL} ontologies. J.\ Web Sem.\ 33,
  pp.\ 30--49 (2015)

\end{thebibliography}


\clearpage
\appendix


{\noindent\LARGE\textbf{Appendix}}

\section{Proofs for Section~\ref{sect:tqCQs}}

\LEMredOnecorrect*
\begin{proof}
%
  (sketch) Assume that $Q'=(\Tmc',\Sigma',q_0')$ is FO-rewritable. Then there 
  is a tUCQ-re\-wri\-ting $\varphi$ \cite{IJCAI13,IJCAI15}. For all tCQs $q$ in
  $\varphi$, it holds that $q \subseteq_{\Tmc'} q_0'$. We can assume w.l.o.g.\
  that every CQ $q$ in $\varphi$ is $\subseteq$-minimal with this
  property. We argue that $\varphi$ must be conformant. Let $q$ be a CQ in
  $\varphi$.  First assume that $q$ contains a concept atom $A(x_0)$ (or
  role atom $r(x_0,y)$) where $x_0$ is the answer variable and $A$ not
  of the form $B^y$ (or $r$ not of the form $s^z$). Let $q^-$ be $q$
  without this atom. The fact that $q \subseteq_{\Tmc'} q_0'$ is equivalent
  to $a_{x_0}$ being an answer to query $Q'$ on the ABox $\Amc_q$,
  which is $q$ seen as an ABox with root $a_{x_0}$. Entailment of AQs
  under a TBox can be characterized by derivation trees, see
  e.g.~\cite{IJCAI16}, which are similar to Datalog proof trees.
  Here, it is a consequence of the syntactic shape of $\Tmc'$ that
  such a proof tree for $N(a)$, with $a$ some individual, will not
  contain facts $A(a)$ (or ($r(a,b)$) in the derivation, where $A$ is
  not of the form $B^x$ (or $r$ not of the form $s^y$). It follows
  that $q^- \subseteq_{\Tmc'} q_0'$, a contradiction to minimality of
  $q$. Second, assume that $q$ contains a concept atom $A^x(y)$ (or
  role atom $r^x(y,z)$), with $y$ not the answer variable. Again
  regarding a proof tree for $N(a)$, the syntactic shape of $\Tmc'$
  prevents atoms of the described shape to occur at any other
  individual than $a$. It follows that $q^- \subseteq_{\Tmc'} q_0'$, again
  contradicting minimality of $q$.  \qed
\end{proof}

\PROPconformant*
\begin{proof}
  ``$\Rightarrow$''.  Assume that $Q$ is FO-rewritable. By
  Lemma~\ref{lem:french}, there is a UCQ-rewriting $q$ of $Q$ in which
  every CQ is a derivative of $q_0$. Let $q'$ be the corresponding UCQ
  for $Q'$. We argue that $q'$ is a rewriting of $Q'$. Thus let \Amc
  be a $\Sigma'$-ABox and $a_0 \in \mn{Ind}(\Amc)$. We have to show
  that $\Amc \models Q'(a_0)$ iff $\Amc \models q'(a_0)$. Let
  $\xbf=x_1 \cdots x_n$ be the answer variables in $q_0$ and let $\abf
  = a_1 \cdots a_n$ be a tuple of individual names that do not occur
  in \Amc. 

  In the first step, we unravel $\Amc$ into an infinite tree-shaped
  $\Sigma'$-ABox $\Amc'$ such that
  \begin{enumerate}

  \item $\Amc \models Q'(a_0)$ iff $\Amc' \models Q'(a_0)$ and 

  \item $\Amc \models q'(a_0)$ iff $\Amc' \models q'(a_0)$.

  \end{enumerate}
  A \emph{path in \Amc} is a sequence $b_1,r_1,b_2,r_2,\dots,b_k$ such
  that $b_1 = a_0$, $b_2,\dots,b_k \in \mn{Ind}(\Amc)$,
  $r_1,\dots,r_{k-1}$ are role names that occur in \Amc, and
  $r_i(b_i,b_{i+1}) \in \Amc$ for $1 \leq i < k$. $\Amc'$ consists of
  all assertions
  \begin{itemize}

  \item $A(p)$ whenever $p$ is a path in \Amc that
    ends with $b$ and $A(b) \in \Amc$; and

  \item $r(p,p')$ whenever $p$ is a path in \Amc that
    ends with $b$, $p'=rb'$ is a path in \Amc, and $r(b,b') \in \Amc$.

  \end{itemize}
  It follows from standard results about OMQs from $(\EL,\text{AQ})$
  that Condition~1 is satisfied, see for example~\cite{lutz-2012}. It can be 
  verified that Condition~2 is also satisfied since $q'$ is a tUCQ.

  \medskip

  Using compactness and monotonicity, it is easy to show that since
  $q$ is a rewriting of $Q$ on (finite) ABoxes, it is also a rewriting
  of $Q$ on infinite ABoxes. It thus remains to show that 
  \begin{enumerate}

  \item[3.] $\Amc' \models Q'(a_0)$ iff $\Bmc' \models Q(\abf)$ and

  \item[4.] $\Amc' \models q'(a_0)$ iff $\Bmc' \models q(\abf)$.

  \end{enumerate}
  where $\Bmc'$ is the (infinite) $\Sigma$-ABox that corresponds to
  $\Amc'$, that is, $\Bmc'$ is obtained from $\Amc'$ by replacing
  every assertion $A^{x_i}(a_0)$ with $A(a_i)$ and every assertion
  $r^{x_i}(a_0,b)$ with $r(a_i,b)$, adding $r(a_i,a_j)$ whenever
  $r(x_i,x_j) \in q_0$, and then removing all remaining assertions
  that contain a symbol from $\Sigma' \setminus \Sigma$ or the
  individual name $a_0$.

  \medskip

  In fact, Condition~3 can be shown using the construction of $Q'$ and
  by translating counter models, and Condition~4 can be shown using
  the construction of~$q'$. 

  \medskip ``$\Leftarrow$''. Assume that $Q'$ is FO-rewritable. Then
  there is a conformant tUCQ-rewriting $q'$ of $Q'$. Let $q$ be the
  corresponding UCQ for $Q$. We have to show that $q$ is a rewriting
  of $Q$ (this also establishes the ``moreover'' part of the
  lemma). Thus let \Amc be a $\Sigma$-ABox and $\abf \subseteq
  \mn{Ind}(\Amc)$. We aim to show that $\Amc \models Q(\abf)$ iff
  $\Amc \models q(\abf)$. Let $\xbf=x_1 \cdots x_n$ be the answer
  variables in $q_0$ and $\abf = a_1 \cdots a_n$. It suffices to
  consider ABoxes \Amc such that
  \begin{itemize}

  \item[($*$)] $r(x_i,x_j) \in q_0$ implies $r(a_i,a_j) \in \Amc$

  \end{itemize}
  since, otherwise, $\Amc \not \models Q(\abf)$ and $\Amc \not 
  \models q(\abf)$. 

  \medskip

  In the first step, we unravel $\Amc$ into an infinite $\Sigma$-ABox
  $\Amc'$ of more regular shape and with $\abf \subseteq
  \mn{Ind}(\Amc')$ such that
  \begin{enumerate}

  \item $\Amc \models Q(\abf)$ iff $\Amc' \models Q(\abf)$ and 

  \item $\Amc \models q(\abf)$ iff $\Amc' \models q(\abf)$.

  \end{enumerate}
  $\Amc'$ is constructed as follows. Start with the minimal ABox
  $\Amc'$ that satisfies ($*$). Then extend $\Amc'$ as follows. A
  \emph{path in \Amc} is a sequence $b_1,r_1,b_2,r_2,\dots,b_k$ such
  that $b_1 \in \abf$, $b_2,\dots,b_k \in \mn{Ind}(\Amc)$,
  $r_1,\dots,r_{k-1}$ are role names that occur in \Amc, and
  $r_i(b_i,b_{i+1}) \in \Amc$ for $1 \leq i < k$. Include in $\Amc'$
  all assertions
  \begin{itemize}

  \item $A(p)$ whenever $p$ is a path in \Amc that
    ends with $b$ and $A(b) \in \Amc$;

  \item $r(p,p')$ whenever $p$ is a path in \Amc that
    ends with $b$, $p'=rb'$ is a path in \Amc, and $r(b,b') \in \Amc$.

  \end{itemize}
  This finishes the construction of $\Amc'$. It can be proved that
  Conditions~1 and~2 are satisfied, translating counter models to
  prove Condition~1 and exploiting the construction of $q$ (which
  ensures that $q$ is a union of tqCQs that contains only role atoms
  from $q_0$ among the answer variables) for Condition~2.
  
  \medskip

  By compactness and monotonicity, $q'$ is a rewriting of $Q'$ also on
  infinite ABoxes. It thus remains to show that 
  \begin{enumerate}

  \item[3.] $\Amc' \models Q(\abf)$ iff $\Bmc' \models Q'(a_0)$ and

  \item[4.] $\Amc' \models q(\abf)$ iff $\Bmc' \models q'(a_0)$.

  \end{enumerate}
  where $\Bmc'$ is the (infinite) $\Sigma'$-ABox that corresponds to
  $\Amc'$, that is, $\Bmc'$ is obtained from $\Amc'$ by
  replacing all assertions $A(a_i)$ with $A^{x_i}(a_0)$ and all
  assertions $r(a_i,b)$ with $r^{x_i}(a_0,b)$, and then removing all
  role assertions that involve only answer variables.
  
  \medskip

  In fact, Condition~3 can be shown using the construction of $Q'$ and
  by translating counter models, and Condition~4 can be shown using
  the construction of $q$. In both cases, one exploits that ($*$)
  holds for $\Amc'$, which is a consequence of the fact that it
  holds for \Amc.
\qed
\end{proof}

\section{Proofs for Section~\ref{sect:rCQs}}

We introduce a backwards chaining algorithm for computing
UCQ-rewritings of OMQs from $(\EL,\text{rCQ})$ that we refer to as
$\mn{bc}_{\text{rCQ}}$. In a sense, $\mn{bc}_{\text{rCQ}}$ is the
natural generalization of $\mn{bc}_{\text{AQ}}$ to rCQs. We first need
to generalize some relevant notions underlying $\mn{bc}_{\text{AQ}}$.

Let $q$ be a CQ, $q' \subseteq q$, and $r(x,y) \in q$. Then $q'$ is a
\emph{tree subquery in $q$ with link $r(x,y)$} if $q'$ is tree-shaped and
the restriction of $q$ to the variables reachable from $y$ in the
directed graph $G_q$, $\mn{var}(q') \cap \mn{avar}(q)=\emptyset$, and
$s(u,z) \in q$ with $u \notin \mn{var}(q')$ and $z \in \mn{var}(q')$
implies $s(u,z)=r(x,y)$.  Note that, taken together, $r(x,y)$ and $q'$
can be viewed as an \EL-concept $\exists r . q'$. Let $q$ and $q'$ be
CQs, $C~\sqsubseteq~D$ a concept inclusion, and $x \in
\mn{var}(q)$. Then $q'$ is \emph{obtained from $q$ by applying $C
  \sqsubseteq D$ at $x$} if $q'$ can be obtained from $q$ by
\begin{itemize}

\item removing $A(x)$ for all concept names $A$ with $\models D
  \sqsubseteq A$;

\item for each tree subquery $q'$ of $q$ with link $r(x,y)$ such that
  $\models D \sqsubseteq \exists r . {q'}$, removing $r(x,y)$
  and~$q'$;

\item adding $A(x)$ for all concept names $A$ that occur in $C$ as a
  top-level conjunct;

\item adding $\exists r . E$ as a CQ with root $x$, for each
  $\exists r.E$ that is a top-level conjunct of~$C$.

\end{itemize}
%
%
Let $q,q'$ be CQs. We write $q' \prec q$ if $q'$ can be obtained from
$q$ by selecting a tree subquery $q''$ in $q$ with link $r(x,y)$ and
removing both $r(x,y)$ and $q''$. We use $\prec^\ast$ to denote the
reflexive and transitive closure of~$\prec$ and say that \emph{$q'$ is
  $\prec$-minimal with $q' \subseteq_\Tmc q_0$} if $q'
\subseteq_\Tmc q_0$ and there is no $p \prec q'$ with $\Tmc
\models p \sqsubseteq q_0$.

Started on OMQ $Q=(\Tmc\!,\Sigma,q_0)$, algorithm
$\mn{bc}_{\text{rCQ}}$ starts with a set $R$ that contains for each
fork rewriting $q_r$ of $q_0$ a CQ $p \prec^\ast q_r$ that is
$\prec$-minimal with $p \subseteq_\Tmc q_0$ and then
exhaustively performs the same steps as $\mn{bc}_{\text{AQ}}$:
\begin{enumerate}

  \item find $q \in R$, $x \in \mn{var}(q)$, a concept inclusion
  $E \sqsubseteq F \in \Tmc$, and $q'$ such that $q'$ is obtained from $q$ by applying 
  $E \sqsubseteq F$ at $x$;

  \item find a $q'' \prec^\ast q'$ that is $\prec$-minimal with $q'' 
  \subseteq_\Tmc q_0$, and add $q''$ to $R$.

\end{enumerate}
We use $\mn{bc}_\text{rCQ}(Q)$ to denote the potentially infinitary
UCQ $\bigvee R|_\Sigma$, $R$ obtained in the limit.

The following establishes the central properties of the
$\mn{bc}_{\text{rCQ}}$ algorithm. It is proved by showing that there
is a correspondence between the backwards chaining implemented in
$\mn{bc}_{\text{rCQ}}$ and the chase, a forward chaining procedure
that can be applied to an ABox \Amc and a TBox \Tmc to construct a
universal model of \Amc and \Tmc\!\!\!, that is a model that gives exactly
the certain answers on \Amc to any OMQ from $(\EL,\text{AQ})$ based on
$\Tmc$\!.
%
%
\begin{lemma}
\label{lem:soundcompl}
  Let $Q=(\Tmc\!,\Sigma,q_0)$ be an OMQ from $(\EL,\text{rCQ})$.
  If $\mn{bc}_{\text{rCQ}}(Q)$ is finite, then it is a UCQ-rewriting of $Q$.
  Otherwise, $Q$ is not FO-rewritable.
%
%
%
\end{lemma}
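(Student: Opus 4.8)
The plan is to prove Lemma~\ref{lem:soundcompl} by establishing a tight correspondence between the backwards chaining carried out by $\mn{bc}_{\text{rCQ}}$ and the forward chaining carried out by the chase. As announced in the excerpt, the chase of an ABox $\Amc$ with $\Tmc$ produces a universal model $\Imc_{\Amc,\Tmc}$ that yields exactly the certain answers for any OMQ based on $\Tmc$; in particular, $\Amc\models Q(\abf)$ iff $\Imc_{\Amc,\Tmc}\models q_0(\abf)$ for $Q=(\Tmc\!,\Sigma,q_0)$. I would exploit that the chase has a forest shape: it consists of $\Amc$ together with directed trees dangling from the ABox individuals. A homomorphism $h$ from $q_0$ into $\Imc_{\Amc,\Tmc}$ therefore decomposes into a ``core'' part mapped into (or near) $\Amc$ and tree-shaped parts mapped into the dangling trees, which is exactly the information recorded by a fork rewriting together with a splitting (Lemma~\ref{lem:splitting}). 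This is why $\mn{bc}_{\text{rCQ}}$ initializes $R$ with a $\prec$-minimal CQ for each fork rewriting of $q_0$.

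\smallskip

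\noindent\textbf{Soundness.} First I would argue that every CQ $p$ ever added to $R$ satisfies $p\subseteq_\Tmc q_0$, so that each disjunct is a sound lower bound. The initial CQs satisfy this by construction (they are $\prec$-minimal with $p\subseteq_\Tmc q_0$, and fork rewritings only equate variables, which is certain-answer preserving). For the inductive step I would show that the backwards-chaining operation preserves $\subseteq_\Tmc q_0$: if $q'$ is obtained from $q$ by applying $E\sqsubseteq F$ at $x$, then any model of $\Amc$ and $\Tmc$ satisfying $q'$ also satisfies $q$, because applying the inclusion ``unfolds'' an entailed consequence at $x$, and passing to a $\prec$-minimal $q''\prec^\ast q'$ only weakens the query (removing entailed tree subqueries). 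Hence $\Amc\models\bigvee R(\abf)$ implies $\Amc\models Q(\abf)$, giving soundness of the (possibly infinitary) rewriting.

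\smallskip

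\noindent\textbf{Completeness.} This is the main obstacle and the technical heart of the proof. I would show that whenever $\Amc\models Q(\abf)$, some disjunct of $\mn{bc}_{\text{rCQ}}(Q)$ already matches $\Amc$ at $\abf$ using only $\Sigma$-symbols. By the chase characterization, $h:q_0\to\Imc_{\Amc,\Tmc}$ with $h(\xbf)=\abf$. I would then set up an induction along the (well-founded, level-by-level) construction of the chase: the key invariant is that for every intermediate chase stage there is a CQ $q\in R$ that homomorphically maps into that stage with answer variables fixed to $\abf$. The inductive step reverses a single chase rule application: when a chase step uses $E\sqsubseteq F$ to generate new facts at some element, the matching CQ $q$ can be ``pulled back'' by applying $E\sqsubseteq F$ at the corresponding variable $x$, after which a $\prec$-minimal $q''\prec^\ast q'$ still maps into the earlier stage; this is precisely the step performed by $\mn{bc}_{\text{rCQ}}$. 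Crucially, the tree-subquery machinery ($\prec$ and tree subqueries with links) is what lets us peel off an entire dangling tree introduced by one existential in $F$, matching how the chase grows trees. Reaching the base stage (the ABox $\Amc$ itself, using only $\Sigma$-symbols) yields a disjunct in $R|_\Sigma$ that maps into $\Amc$, establishing completeness.

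\smallskip

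\noindent\textbf{The dichotomy.} Finally I would derive the ``finite vs.\ non-FO-rewritable'' alternative. If $R$ stabilizes to a finite set, soundness and completeness show $\bigvee R|_\Sigma$ is a genuine UCQ-rewriting. If $R$ is infinite, I would argue $Q$ is not FO-rewritable by exhibiting, for each candidate FO-rewriting, an ABox on which it fails: the infinitely many non-redundant CQs correspond to arbitrarily deep chase unfoldings, so the certain answers depend on unbounded reachability in the data, which no FO formula can express. The cleanest route is to reuse the known locality/unboundedness argument from \cite{IJCAI13,IJCAI15} via the ABoxes $\Amc_q$ associated to the generated CQs, where an infinite $\prec$-antichain of non-equivalent disjuncts witnesses non-uniform unboundedness and hence non-FO-rewritability. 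I expect the delicate points to be (i)~matching the tree-subquery removal exactly to single chase steps in the completeness induction, and (ii)~ensuring the restriction to $\Sigma$-symbols is handled correctly, since backwards chaining may transit through non-$\Sigma$ symbols before returning to a $\Sigma$-only disjunct.
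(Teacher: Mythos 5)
Your overall architecture matches the paper's proof closely: soundness from the fact that every CQ added to $R$ is required to satisfy $q\subseteq_\Tmc q_0$, completeness by induction on the length of a chase sequence demonstrating $\mn{ch}_\Tmc(\Amc)\models q_0(\abf)$, reversing one chase step per induction step via backward rule application followed by $\prec$-minimization, and the second part via a boundedness characterization of FO-rewritability from prior work (the paper uses a fact from \cite{IJCAI16} about pseudo ditree $\Sigma$-ABoxes of bounded width and outdegree, arguing that $\prec$-minimality bounds the outdegree of each $\Amc_q$ and FO-rewritability then bounds the depth, so only finitely many CQs can arise). One small caution on soundness: you say that $\prec$-minimization ``only weakens the query''; taken literally that would destroy soundness, since a weaker query need not remain contained in $q_0$. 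Soundness holds only because Step~2 of the algorithm explicitly demands that the chosen $\prec$-minimal $q''$ still satisfies $q''\subseteq_\Tmc q_0$ -- it is a definition-chase, not an entailment-preservation argument.

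The genuine gap is in your completeness induction. You assert that the matching CQ $q\in R$ ``can be pulled back by applying $E\sqsubseteq F$ at the corresponding variable''. This fails exactly when the homomorphism $h$ from $q$ into $\Amc_{1}$ maps part of $q$ into the freshly chased tree $\Amc_G$ in a non-tree-like way, i.e., when $q$ contains forks $r(z_i,y),r(z_j,y)$ with $h(y)$ equal to the fresh individual $d$ (or forks below $d$). In that situation the portion of $q$ sitting inside $\Amc_G$ is not a tree subquery with a unique link, so neither the backward application step nor the $\prec$-removal can peel it off, and the induction step as you state it does not go through. The paper's proof devotes a dedicated argument (property $(*)$) to this: it switches from $q$ to a different element of $R$ stemming from the fork rewriting of $q_0$ in which the offending forks have been eliminated, and shows that a suitable derivative of that minimized fork rewriting, satisfying $(*)$, is also present in $R$ and still matches $\Amc_1$. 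This is precisely the reason the algorithm seeds $R$ with a $\prec$-minimal representative of \emph{every} fork rewriting of $q_0$ rather than just of $q_0$ itself -- you correctly observe at the outset that fork rewritings are needed, but your induction never uses them, and without this case split the completeness direction is not established.
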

In preparation for the proof of Lemma~\ref{lem:soundcompl}, we remind
the reader of the standard chase procedure. The chase is a forward
chaining procedure that exhaustively applies the concept 
inclusions of a TBox to an ABox in a rule-like fashion. Its final
result is a (potentially infinite) ABox in which all consequences of
\Tmc are materialized. To describe the procedure in detail, it is
helpful to regard \EL-concepts $C$ as tree-shaped ABoxes
$\Amc_{C}$. $\Amc_{C}$ can be obtained from the concept query
corresponding to $C$ by identifying its individual variables with
individual names. Now let \Tmc be an \EL-TBox and \Amc an ABox.
A \emph{
chase step} consists in choosing a concept inclusion $C \sqsubseteq D \in
\Tmc$ and an individual $a \in \mn{Ind}(\Amc)$ such that $\Amc \models
C(a)$, and then extending \Amc by taking a copy $\Amc_{D}$ of $D$
viewed as an ABox with root $a$ and such that all non-roots are fresh
individuals, and then setting $\Amc := \Amc \cup \Amc_{D}$.  
The \emph{result of chasing \Amc with \Tmc}\!,
denoted with $\mn{ch}_\Tmc(\Amc)$, is the ABox obtained by
exhaustively applying chase steps to \Amc in a fair way. It is
standard to show that the chase procudes a universal model, i.e.\ for
all CQs $q$ and tuples $\abf=(a_1,\dotsc,a_n)$ over
$\mn{Ind}(\Amc)$, it holds that $\Amc,\Tmc \models q(\abf)$ iff
$\mn{ch}_\Tmc(\Amc) \models q(\abf)$.
%
We now prove Lemma~\ref{lem:soundcompl}.
\begin{proof}
  For the first part, let $\mn{bc}_{\text{rCQ}}(Q)$ be finite, and \Amc be a 
  $\Sigma$-ABox and $\abf \subseteq 
  \mn{Ind}(\Amc)$. We have to show 
  $\Amc \models \bigvee\!R|_\Sigma(\abf)$ iff $\Amc 
  \models Q(\abf)$. For direction 
  ``$\Rightarrow$'', assume that $\Amc \models 
  \bigvee\!R|_\Sigma(\abf)$. Then there 
  is a $q \in R|_\Sigma$ with $\Amc \models q(\abf)$. 
  Consequently $\Amc,\Tmc \models 
  q(\abf)$. By construction of $R$, all its elements $q$ satisfy $q 
  \subseteq_\Tmc q_0$, thus $\Amc \models Q(\abf)$, as required. 
  
  For direction ``$\Leftarrow$'', we examine the chase sequence that 
  witnesses $\Amc \models Q(\abf)$. W.l.o.g.\ we can 
  assume that \Tmc contains no conjunctions on the right-hand side of concept 
  inclusions, i.e.\ \Tmc consists only of concept inclusions of the form $C 
  \sqsubseteq A$ and $C \sqsubseteq \exists r.D$. 
  If $\Amc \models Q(\abf)$, then $\mn{ch}_\Tmc(\Amc) \models q_0(\abf)$ and 
  consequently, there is a sequence of (not necessarily $\Sigma$-) ABoxes 
  $\Amc=\Amc_0,\Amc_1,\dots,\Amc_k$ that \emph{demonstrates} 
  $\mn{ch}_\Tmc(\Amc) \models q_0(\abf)$, that is, each $\Amc_{i+1}$ 
  is obtained from $\Amc_i$ by a single chase step and $\Amc_k \models 
  q_0(\abf)$. 
  
  It thus suffices to prove by induction on $k$ that
  if 
  $\Amc=\Amc_0,\dots,\Amc_k$ is a chase sequence that demonstrates
  $\mn{ch}_\Tmc(\Amc) \models q_0(\abf)$, then $\Amc \models 
  \bigvee\!R|_\Sigma(\abf)$.
  The induction start is trivial: For $k=0$, $\Amc_k \models q_0(\abf)$
  implies $\Amc \models q_0(\abf)$. Since $q_0$ is a 
  fork rewriting of itself, and by definition of $R_0$, there is a query $p 
  \prec^\ast q_0$ in $R$. Restrict the homomorphism witnessing $\Amc\models 
  q_0(\abf)$ to the variables still present in $p$, and the result is a 
  homomorphism from $p$ to \Amc, mapping the answer variables to \abf. Thus, 
  we have $\Amc \models p(\abf)$, and $\Amc \models 
  \bigvee\!R|_\Sigma(\abf)$.
  For the induction step, assume that
  $\Amc=\Amc_0,\dots,\Amc_k$ is a chase sequence that demonstrates
  $\mn{ch}_\Tmc(\Amc) \models q_0(\abf)$, with $k>0$.  Applying IH to 
  the subsequence $\Amc_1,\dots,\Amc_k$, we obtain that $\Amc_1 \models
  \bigvee\!R|_\Sigma(\abf)$.  Thus there is a $q \in 
  R|_\Sigma$ with $\Amc_1 \models
  q(\abf)$, witnessed by a homomorphism $h$ from $q$ to $\Amc_1$ with $h(\xbf) 
  = \abf$. If $\Amc_0 \models q(\abf)$, then we are done. Otherwise, look at 
  the chase step that led from $\Amc_0$ to $\Amc_1$. 
%
%
%
  Assume that $\Amc_1$ is 
  obtained from $\Amc_{0}$ by choosing a concept inclusion $E \sqsubseteq F 
  \in 
  \Tmc$ and $b \in \mn{Ind}(\Amc_{0})$ with $\Amc_{0}\models E(b)$, and adding 
  a copy of the ABox $\Amc_{F}$ to $\Amc_{0}$ at $b$. There are two 
  possibilities:
  \begin{itemize} 
  
  \item $F = A$ 
  
  An atom $A(b)$ is added to $\Amc_0$. Further, let $z_1,\dots,z_n$ be 
  all variables of $q$ such that $h(z_i)=b$ and $A(z_i)\in q$. There must be 
  at least one such $z_i$, since otherwise $h$ would not depend on any 
  assertions added in the construction of $\Amc_1$ from $\Amc_0$ and thus 
  witnessed $\Amc_{0}\models q(\abf)$, a contradiction.
  \medskip 
  
  \item $F = \exists r.G$
    
  An atom $r(b,d)$ with $d$ fresh, and an ABox $\Amc_G$ of fresh individuals 
  rooted in $d$, are added to $\Amc_0$. Let $\ybf = y_1,\dotsc,y_m$ be all 
  variables of $q$ such that $h(y_i)=d$, and $\zbf = z_1,\dots,z_n$ be all 
  variables of $q$ such that $h(z_i)=b$ and there is at least one 
  $r$-successor of $z_i$ in $\ybf$. As above, there must be at least one such 
  $z_i$. Note that there are no answer variables among $\ybf$, as $d$ is 
  anonymous.
  \end{itemize}
  Ideally, in the second case we would have $q$ conform to the following 
  property:
  \begin{itemize}
    \item[($*$)] For every $y_i\in\ybf$, it holds that $y_i$ is the root of a 
    tree subquery $q'_i$ of $q$ with link $r(z_j,y_i)$, and $\Amc_G 
    \models q'_i$, where $z_j\in\zbf$.
  \end{itemize}
  Note that this is not guaranteed, as there might be forks $r(z_i,y), 
  r(z_j,y)$ occurring in $q$ at $\zbf$, or $r(y_i, x), r(y_j,x)$ with 
  $y_i,y_j$ from $\ybf$ or below. These variables 
  could still be mapped to the tree-shaped part $\Amc_F(b)$ of $\Amc_1$ by 
  $h$. Nonetheless, we can find a query $q' \in R$ such that $q'$ fulfills 
  ($*$): Assume there is a fork $r(z_i,y), r(z_j,y)$ 
  with $y \in \ybf$ (forks below are handled in the same way), and $q$ is a 
  derivative of some $\prec$-minimized fork rewriting $p$ of $q_0$. Then 
  variables $y, z_i, z_j$ are part of the core of $p$, as backward application 
  of concept inclusions does not generate forks. Let $p'$ be $p$ with the fork 
  $r(z_i,y), r(z_j,y)$ eliminated; we are guaranteed to have a query $p'' = 
  \mn{min}(p') \in R$. Note first that in $p''$, the subtree rooted in $z$ 
  (the identification of $z_i$ and $z_j$) might have been deleted by 
  minimization. There has to be another $r(z_k,y_k)$, as otherwise, 
  $\Amc_0\models q(\abf)$, in which case we again would be done. From $p''$, 
  we can obtain a derivative $q'$ of $p''$ of the desired form by backwards 
  application of concept inclusions in $\Tmc$\!. This can be shown by 
  induction on 
  the length of the backwards chaining sequence that led from $p$ to $q$: 
  Either we can apply a concept inclusion $\alpha$ to both $p$ and $p''$, or 
  it is applied to the deleted tree in $p$, in which case we omit this 
  application in derivatives of $p''$ when generating $q'$. In either case, 
  the original fork will not be present in $q'$. Continue the proof using 
  query $q'$ for $q$.
  \smallskip
  
  Let the CQs $q^0,\dots,q^n$ be such that $q=q^0$, and $q^{i+1}$ can be
  obtained from $q^i$ by doing the following if $z_{i} \in
  \textup{var}(q^i)$ (otherwise, just set $q^{i+1} := q^i$):
  \begin{enumerate}
  
  \item remove $A(z_i)$ if $F = A$;
  
  \item remove $r(z_i,y)$ and the subquery $q'$ of $q$ with link $r(z_i,y)$ if 
  $\models F \sqsubseteq \exists r.q'$;
  
  \item add $A(z_i)$ for all concept names $A$ that are top-level conjuncts of 
  $E$;
  
\item add $\exists r.H$ as a CQ with root $z_i$, for each
  $\exists r.H$ that is a top-level conjunct of
  $E$; 
  
  
  \item minimize the resulting ${q^i}'$, that is, choose
    $q^{i+1} \prec^{\ast} {q^i}'$ such that
    $q^{i+1}$ is $\prec$-minimal with $q^{i+1} \subseteq_\Tmc q_0$.
  
  \end{enumerate}
  It is easy to prove by induction on $i$ that $q^i \in R$ for all $i \leq n$. 
  It thus remains to argue that $\Amc_0 \models q^n(\abf)$. To do this, we 
  produce maps $h_0,\dots,h_n$ such that $h_i$ is a homomorphism from $q^i$ to 
  $\Amc_1$ with $h_i(\xbf) = \abf$ and such that $h_i(z_j)=b$ if $z_j \in 
  \textup{var}(q^i)$, for all $i \leq n$.  Start with $h_0=h$. To produce 
  $h_{i+1}$ from $h_i$, first restrict $h_i$ to the remainder of $q^i$ after 
  the removals in Step~2 were carried out. Then extend $h_i$ to cover all fresh
  elements introduced via the subtrees $\exists r .H$ in Step~4. Note that, 
  since $\Amc_0 \models E(b)$ and $h_i(z_i)=b$, this is possible. For the same 
  reason, the resulting homomorphism $h_{i}'$ respects all the concept 
  assertions added in Step~3. Finally, to deal with the minimization in 
  Step~5, restrict $h_{i}'$ to $\mn{var}(q^{i+1})$.
  
  By construction of the queries $q^0,\dots,q^n$ and the homomorphisms $h_0$, 
  $\dots$, $h_n$, there is no atom in $q^n$ such that the image of the atom 
  under $h_n$ is in $\Amc_1 \setminus \Amc_0$. To show this, assume to the 
  contrary that there is such an atom $A(x)$ 
  in $q^n$. There are two cases:
  \begin{enumerate}
  
  \item $h(x)=b$.
  
    Then $x=z_i$ for some $i$. Since $A(h(x))=A(b)$ was added by the   
    application of $E \sqsubseteq A$, the atom $A(x)$ was removed in Step~1 
    when constructing $q^{i+1}$ from $q^i$, in contradiction to $A(x)$ being 
    in $q^n$.
    \smallskip
    
  \item $h(x) \neq b$.
  
    Then $h(x)$ is a non-root node of the sub-ABox $\Amc_{\exists r.G}(b)$ of 
    $\Amc_{1}$, and $F = \exists r.G$. There is an answer variable 
    $x_j\in\xbf$ such that there is a path from $x_j$ to $x$, i.e.\ a sequence 
    of individuals $x_j=y_0,\dotsc,y_\ell=x$ such that $r_i(y_i,y_{i+1}) \in 
    q^n$ for some $r_i$, for all $i < \ell$. We find a corresponding path 
    $h(y_0),\dots, h(y_\ell)$ in $\Amc_1$, and since $\Amc_G$ has been linked 
    to $\Amc_0$ only by $r(b,d)$, the individual $b$ must be on that second 
    path. Let $y_p$ be such that $h(y_p)=b$. We must have $y_p= z_i$ for some 
    $i$, and $r_{p}=r$. Note that by ($*$), $y_{p+1}$ is the root of a tree 
    subquery $q'$ of $q^n$ with link $r(z_i,y_{p+1})$ (recall that links are 
    unique). Homomorphism $h_n$ maps $y_{p+1}$ to $d$, so we have $\models F 
    \sqsubseteq \exists r.q'$. Consequently, the subtree of $q^n$ rooted at 
    $y_{p+1}$ was removed in Step~2 when constructing $q^{i+1}$, in 
    contradiction to $A(x)$ being in $q^n$.
  
  \end{enumerate}
  The case of role atoms is similar to subcase~2 above, but simpler (we know 
  that $F = \exists r.G$). We have thus shown that there is no atom in $q^n$ 
  such that the image of this atom under $h_n$ is in $\Amc_1 \setminus 
  \Amc_0$. Consequently $\Amc_0 \models q^n(\abf)$ via $h_n$. 
  As $\Amc_0$ is a $\Sigma$-ABox, it holds that $q^n 
  \in R|_\Sigma$, and we are done with part~1.
  \bigskip

  \noindent 
  Now for the second part of Lemma~\ref{lem:soundcompl}. We prove the 
  contrapositive, using a result from \cite{IJCAI16}:
  \\[2mm]
  {\bf Fact.}  Let $Q=(\Tmc\!,\Sigma,q_0)$ be an OMQ from $(\EL,
  \text{rCQ})$. $Q$ is FO-rewritable iff there is a $k \geq
  0$ such that for all pseudo ditree $\Sigma$-ABoxes
  \Amc 
  of outdegree at most~$|\Tmc|$ and width at most $|q|$: if $\Amc
  \models Q(\abf)$ with $\abf$ from the core of $\Amc$, then
  $\Amc|_{\leq k} \models Q(\abf)$.
  \\[2mm]
  We refrain from giving a detailed definition of the notions used in
  the above statement and only mention that, informally, a pseudo
  ditree ABox \Amc of width $i$ is a tree-shaped ABox (with all edges
  pointing downwards and without multi-edges) whose root has been
  replaced by an ABox with at most $i$ individuals, called the core of
  \Amc. The outdegree refers to the non-core part of \Amc, and
  $\Amc|_{\leq k}$ means the result of removing all nodes from the
  tree part of \Amc that are of depth exceeding $k$, that is, that are
  more than $k$ steps away from the core. 
  
  It is straightforward to verify that every query $q$ ever added to  
  $\mn{bc}_{\text{rCQ}}(Q)$, viewed as an ABox $\Amc_q$, is a pseudo ditree
  $\Sigma$-ABox of width at most $|q|$ such that $\Amc_q \models
  Q(\abf)$ where \abf are the individuals in $\Amc_q$ that correspond
  to the answer variables in $q$. Using that $q$ is $\prec$-minimal
  with $q \subseteq_\Tmc q_0$, it can be shown that removing any
  subtree from $\Amc_q$ results in an ABox $\Amc'$ with $\Amc'
  \not\models Q(\abf)$. We say that $\Amc_q$ is \emph{$\prec$-minimal
    with } $\Amc_q \models Q(\abf)$. This, in turn, can be used to
  prove in a standard way that $\Amc_q$ has outdegree at most
  $|\Tmc|$. By the above fact and the $\prec$-minimality of $\Amc_q$,
  the depth of $\Amc_q$ is thus at most $k$. Clearly, there are only finitely 
  many pseudo ditree $\Sigma$-ABoxes of bounded width, outdegree and depth. 
\qed
\end{proof}
\medskip

\noindent
We next prove Proposition~\ref{lem:red2-correct}, starting with some
preliminaries.  Let $Q= (\Tmc\!, \Sigma, q_0)$ be an OMQ from $(\EL,
\text{rCQ})$ and $q_r$ a fork rewriting of $q_0$.

A conformant tCQ $q'$ can be converted into a corresponding CQ $q$ for
$Q$, as detailed before Proposition~\ref{lem:red2-correct}. For easier
reference, we use $\pi(q')$ to denote $q$. Conversely, let $q$ be a
derivative of $q_r$ in the sense that $q$ can be obtained from the
restriction of $q_r$ to the variables in $\mn{core}(q_r)$ by adding
tree-shaped CQs rooted at variables in $\mn{core}(q_r)$. We can translate $q$
into a \emph{corresponding CQ $q'$ for $Q_{q_r}$} as follows: replace
every atom $A(x)$, $x \in \mn{core}(p)$, with $A^x(x_0)$; every atom
$r(x,y)$, $x \in \mn{core}(p)$ and $y \notin \mn{core}(p)$, with
$r^x(x_0,y)$; delete all atoms $r(x_1,x_2)$, $x_1,x_2 \in
\mn{core}(p)$.  The answer variable in $q'$ is $x_0$. We use $\tau(q)$
to denote the query $q'$. 

It can be verified that $\tau$ produces conformant tUCQs, and that
$\pi(\tau(q))=q$. Moreover, both $\pi$ and $\pi^-$ are injective,
$\pi$ translates $\Sigma'$-queries into $\Sigma$-queries, and $\tau$
translates $\Sigma$-queries into $\Sigma'$-queries.

When $q$ is a derivative of $q_r$, then $p \prec^*q$ is a
\emph{$\prec$-minimization} of $q$ if $p$ is minimal with $p
\subseteq_\Tmc q_0$. Note that this is exactly the minimization
carried out in Step~2 of the $\mn{bc}_{\text{rCQ}}$ algorithm started
on $Q$. When $q'$ is a conformant tCQ, then $p' \prec^*q'$ is a
\emph{$\prec$-minimization} of $q'$ if $p'$ is minimal with
$\Tmc^{\mn{min}}_{q_r} \models p' \sqsubseteq N$. Note that this is
exactly the minimization carried out in Step~2 of the
$\mn{bc}^+_{\text{AQ}}$ algorithm started on
$(Q_{q_r},\Tmc^{\mn{min}}_{q_r})$.
\begin{lemma}
\label{lem:minlem}
  Let $Q= (\Tmc\!, \Sigma, q_0)$ be an OMQ from $(\EL, \text{rCQ})$
  and $q_r$ a fork rewriting of $q_0$.
  Then 
  \begin{enumerate}

  \item if $q$ is a derivative of $q_r$ and $p$ a $\prec$-minimization 
    of $q$, then $\tau(p)$ is a $\prec$-minimization of $\tau(q)$;

  \item if $q'$ is a conformant tCQ and $p$ a $\prec$-minimization
    of $\pi(q')$, then there is a $\prec$-minimization $p'$ of $q'$
    with $\pi(p')=p$;

  \item if $q'$ is a conformant tCQ and $p'$ a $\prec$-minimization of
    $q'$, then $\pi(p')$ is a $\prec$-minimization of $\pi(q')$.

  \end{enumerate}
\end{lemma}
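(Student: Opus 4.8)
The plan is to reduce all three parts to two facts about the translation maps $\pi$ and $\tau$ introduced above: that they are mutually inverse bijections between the derivatives of $q_r$ and the conformant tCQs for $Q_{q_r}$ which respect the relation $\prec$, and that they translate the two minimization criteria into one another. Concretely, I would isolate the following correspondence, which is the technical core of the lemma:
\[
  (\dagger)\qquad q\subseteq_\Tmc q_0 \quad\Longleftrightarrow\quad \Tmc^{\mn{min}}_{q_r}\models C_{\tau(q)}\sqsubseteq N \qquad\text{for every derivative }q\text{ of }q_r.
\]
Granting $(\dagger)$ together with the $\prec$-compatibility of $\tau$ (so that $p\prec q$ iff $\tau(p)\prec\tau(q)$, hence also for $\prec^\ast$ and for $\prec$-minimality), each part is a short calculation. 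For Part~1, if $p$ is a $\prec$-minimization of the derivative $q$, then $\tau(p)\prec^\ast\tau(q)$ and $\Tmc^{\mn{min}}_{q_r}\models C_{\tau(p)}\sqsubseteq N$ by $(\dagger)$; any competitor $p''\prec\tau(p)$ with $\Tmc^{\mn{min}}_{q_r}\models C_{p''}\sqsubseteq N$ equals $\tau(\pi(p''))$ with $\pi(p'')\prec p$ and, by $(\dagger)$, $\pi(p'')\subseteq_\Tmc q_0$, contradicting minimality of $p$. Parts~2 and~3 are symmetric: for Part~2 I would take $p':=\tau(p)$, use $\pi(\tau(p))=p$ to get $\pi(p')=p$, and verify minimality by pulling any smaller competitor back through $\pi$; Part~3 is the image of Part~1 under $\pi$.

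For the $\prec$-compatibility I would observe that the tree subqueries of a derivative $q$ of $q_r$ are exactly the trees hanging off the core (and their subtrees), linked by an edge $r(x,y)$ with $x\in\mn{core}(q_r)$ and $y\notin\mn{core}(q_r)$; no core-to-core edge can be a link, since the core contains answer variables and is not tree-shaped. The map $\tau$ leaves these tree parts untouched and only re-attaches a full tree to the root $x_0$ by turning its link $r(x,y)$ into $r^x(x_0,y)$. Hence $\tau$ induces a bijection on tree subqueries, removal of a tree subquery commutes with $\tau$, and $p\prec q$ iff $\tau(p)\prec\tau(q)$.

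The main obstacle is $(\dagger)$. Here I would use the canonical-database characterization $q\subseteq_\Tmc q_0$ iff $\Amc_q,\Tmc\models q_0(\abf)$ and apply Lemma~\ref{lem:splitting} to $\Amc_q$: the right-hand side holds iff there is a fork rewriting $q''$ of $q_0$ and a splitting $\Pi=\langle R,S_1,\dots,S_\ell,r_1,\dots,r_\ell,\mu,\nu\rangle$ of $q''$ satisfying conditions~1--3 of that lemma. Because $q$ is a derivative of $q_r$, the core of $\Amc_q$ coincides with that of $\Amc_{q_r}$, and by the forest-model structure underlying Lemma~\ref{lem:splitting} the set $R$ maps into the core; thus $\Pi$ is precisely one of the splittings w.r.t.\ $\Amc_{q_r}$ from which a concept inclusion was placed into $\Tmc^{\mn{min}}_{q_r}$, and the answer-variable side condition on such splittings matches condition~1 via the equivalence classes $[x]_{q_r}$. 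It then remains to show that this inclusion fires in the chase of $\Amc_{C_{\tau(q)}}$ with $\Tmc^{\mn{min}}_{q_r}$ exactly under conditions~2 and~3. For this I would prove the chase invariant that, for each core variable $x$, the atom $A^x$ is derived at the root $x_0$ iff $\Amc_q,\Tmc\models A(\nu(x))$, and that a tagged existential $\exists r^x.E$ can be satisfied at $x_0$ iff $\Amc_q,\Tmc\models\exists r.E(\nu(x))$; this is what makes the inclusions $C^x_R\sqsubseteq D^x_R$ simulate $|\mn{core}(q_r)|$ independent copies of the $\Tmc$-chase, one per core variable, all materialized at $x_0$ but separated by the superscripts, while the untagged subtrees below $x_0$ are chased by the plain copy of $\Tmc$. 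Verifying this simulation in both directions, and matching the firing of the splitting inclusion to conditions~2 and~3, is the part that requires genuine care; everything else is routine bookkeeping once $(\dagger)$ is in place.
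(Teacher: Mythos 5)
Your proposal follows essentially the same route as the paper's own (sketched) proof: the heart of both is the equivalence you call $(\dagger)$, obtained from Lemma~\ref{lem:splitting} and the way the splitting-induced concept inclusions of $\Tmc^{\mn{min}}_{q_r}$ encode containment in $q_0$ w.r.t.\ $\Tmc$\!, combined with the observation that $\tau$ and $\pi$ commute with removal of tree subqueries and hence with $\prec$-minimality. Your write-up is in fact somewhat more explicit than the paper's sketch (which handles only Point~1 and dismisses the chase-level verification as standard), but it introduces no new idea and no gap.
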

\begin{proof}
We only sketch a proof of Point~1, Points~2 and~3 are established very
similarly.

Recall that $\prec$-minimization of $\tau(q)$ is based on the TBox
$\Tmc^{\mn{min}}_{q_r}$ and that, due to Lemma~\ref{lem:splitting},
for any query $q' \prec^\ast \tau(q)$ it holds that
$\Tmc^{\mn{min}}_{q_r} \models \bigsqcap_{C(x) \text{ a tree in }
  q'} C_R^x \sqsubseteq N$ iff $\Amc_{q'},\Tmc \models
q_0(\abf)$ for some tuple $\abf$ that can be obtained by starting with
the tuple \xbf of answer variables in $q_0$, then potentially
replacing each variable $x$ from \xbf with a variables from
$[x]_{q'}$, and finally replacing each variable with the
corresponding individual name in $\Amc_{q'}$. It is standard to
prove that this, in turn, is the case iff $q' \subseteq_\Tmc q_0$,
which is what minimization of $q$ is based on.
\qed
\end{proof}

\LEMredTwocorrect*

\begin{proof}
  Let $Q = (\Tmc\!, \Sigma, q_0)$ be an OMQ from
  $(\EL, \text{rCQ})$. We prove Proposition~\ref{lem:red2-correct} by
  relating the runs of $\mn{bc}^+_{\text{AQ}}(q_r)$, $q_r$ a fork
  rewriting of $q_0$, with the run of $\mn{bc}_{\text{rCQ}}(Q)$.

  Recall that $\mn{bc}_{\text{rCQ}}(Q)$ initializes $R$ with a set
  that contains for each fork rewriting $q_r$ of $q_0$, a CQ $p
  \prec^\ast q_r$ that is $\prec$-minimal. For easier reference, we
  denote this initial set $R$ with $R_0$. We use $\mn{min}(q_r)$ to
  denote $p$, thus $R_0 = \{ \mn{min}(q_r) \mid q_r \text{ fork
    rewriting of } q_0 \}$.  Note that the different queries in $R_0$
  do not interact during the run of $\mn{bc}_{\text{rCQ}}(Q)$, that
  is, the final set $R$ can be written as $\bigcup_{p \in R_0} R_p$
  where $R_p$ denotes the result of starting with the set $R= \{ p \}$
  and then exhaustively applying Steps~1 and~2 of
  $\mn{bc}_{\text{rCQ}}$.

  It follows from Point~1 of Lemma~\ref{lem:minlem} that whenever
  $\mn{min}(q_r)=\mn{min}(q'_r)$ for two fork rewritings $q_r$ and
  $q'_r$ of $q_0$, then we can guide the very first minimization
  (after replacing the concept name $N$) during the runs of
  $\mn{bc}^+_{\text{AQ}}(Q_{q_r},\Tmc_{q_r}^{\mn{min}})$ and
  $\mn{bc}^+_{\text{AQ}}(Q_{q'_r},\Tmc_{q'_r}^{\mn{min}})$ (which
  involve `don't care non-determinism') such that, in both cases, they
  query $\tau(p)$ is added to $M$. Consequently, the sets $M$ computed
  in the limit are identical. It therefore suffices to consider for
  each $p \in R_0$ one run
  $\mn{bc}^+_{\text{AQ}}(Q_{q_r},\Tmc_{q_r}^{\mn{min}})$ such that
  $\mn{min}(q_r)=p$. We use $M_p$ to denote the (finite or infinite)
  set of CQs $M$ generated by such a run. 
  Our main aim is
  to show the following.
\\[2mm]
{\bf Claim.} For each $p \in R_0$, $R_p = \{ \pi(q')
\mid q' \in M_p \}$.
\\[2mm]
We argue that this establishes Proposition~\ref{lem:red2-correct} and
then prove the claim. 

First let $Q$ be FO-rewritable. Assume to the contrary of what we have
to show that $\mn{bc}^+_{\text{AQ}}(Q_{q_r},\Tmc_{q_r}^{\mn{min}})$ is
infinite for some fork rewriting $q_r$. Let $\mn{min}(q_r)=p$. Then the
set $M_p$ contains infinitely many $\Sigma'$-queries, thus $R_p$
contains infinitely many $\Sigma$-queries by injectivity of
$\pi$. By Lemma~\ref{lem:soundcompl}, this means that $Q$ is not
FO-rewritable, a contradiction. We also have to show that
$$
\displaystyle
\bigvee_{p \text{ fork rewriting for } q_0} \;\; \bigvee_{q' \in M_p|_\Sigma}
\pi(q') 
$$
is a rewriting of $Q$. However, by the claim the above query is simply
$\bigvee R|_\Sigma$, $R$ the set computed by
$\mn{bc}_{\text{rCQ}}(Q)$.  It thus suffices to invoke
Lemma~\ref{lem:soundcompl}. 

Conversely, let $Q$ be non-FO-rewritable. By
Lemma~\ref{lem:soundcompl}, $\mn{bc}_{\text{rCQ}}(Q)$ is infinite and
thus $R_p|_\Sigma$ is infinite for at least one $p$. By injectivity of
$\pi^-$, $M_p|_\Sigma$ is infinite.

  %

  \smallskip
  We now prove the claim. Let $p \in R_0$. 

  \medskip 

  ``$\subseteq$''. Let $q\in R_p$. Then there is a sequence of CQs
  $p=q_1, \dotsc, q_m = q$ such that each $q_{i+1}$ is obtained from
  $q_i$ by applying Steps~1 and~2 of the $\mn{bc}_{\text{rCQ}}$
  algorithm. We prove by induction on $i$ that $\tau(q_i) \in M_p$,
  thus $q=\pi(\tau(q)) \in \{ \pi(q') \mid q' \in M_p \}$ as
  required.
  
  For the induction start, note that
  $\mn{bc}^+_{\text{AQ}}(Q_{q_r},\Tmc^{\mn{min}}_{q_r})$ initializes
  $M_p$ with $\{ N(x) \}$, that $\bigsqcap_{C(x) \text{ a tree in }
    q_r} C_R^x \sqsubseteq N$ is in $\Tmc_{q_r}$, and that the
  left-hand side of this CI is nothing but $\tau(q_r)$. Thus,
  $\mn{bc}^+_{\text{AQ}}(Q_{q_r},\Tmc^{\mn{min}}_{q_r})$ adds a
  minimization of $q_r$ to $M$. By Point~1 of Lemma~\ref{lem:minlem}, we can
  assume this minimization to be $p$, thus $p \in M$ as required.  
 
  For the induction step, 
%
%
%
  assume that $q_{i+1}$ is obtained from $q_{i}$ by application 
  of a concept inclusion $\alpha = C \sqsubseteq D \in \Tmc$ at $x$, resulting 
  in a CQ $\widehat q$, and subsequent minimization of $\widehat q$ according 
  to Step~2 of the $\mn{bc}_{\text{rCQ}}$ algorithm. We first show that it is 
  possible to apply a concept inclusion $\alpha' \in \Tmc_{q_r}$ at a variable 
  $x'$ in $\tau(q_i)$ such that the resulting CQ is $\tau(\widehat q\kern .5pt)$. There
  are two cases:
  \begin{enumerate}

  \item $x \notin \mn{core}(p)$. Then $x$ and the subtree below it are
    present in $\tau(q_i)$. We apply $\alpha'=\alpha$ at $x'=x$ in
    $\tau(q_i)$. 
    
  \item $x \in \mn{core}(p)$. 
  %
  %
  %
  %
  %
  %
   We apply $C^x_R \sqsubseteq D^x_R \in \Tmc_{q_r}$ at $x_0$ in
   $\tau(q_i)$.

   %
 %
  %
%
\end{enumerate}
In both cases, it can be verified that the resulting query is
$\tau(\widehat q\kern 0.5pt)$.  It remains to apply Point~1 of 
Lemma~\ref{lem:minlem}:
since $\widehat q$ is a derivative of $q_r$ and $q_{i+1}$ is the
result of minimizing $\widehat q$, we can minimize $\tau(\widehat q\kern 
0.5pt)$
to obtain $\tau(q_{i+1})$.


  \medskip

  ``$\supseteq$''.  Let $q \in M_p$. Then $M_p$ contains a sequence of
  \EL-concepts $N(x)=q_0,\dotsc,q_m=q$ such that each $q_{i+1}$ is
  obtained from $q_i$ by applying Steps~1 and~2 of the
  $\mn{bc}^+_{\text{AQ}}$ algorithm, using TBox $\Tmc_{q_r}$ in Step~1 and
  $\Tmc^{\mn{min}}_{q_r}$ in Step~2. We prove by induction on $i$ that 
  $\pi(q_i)\in R_p$ for $1 \leq i \leq m$.
  
  For the induction start, note that $M_p$ is initialized to
  $\{N(x)\}$, and the concept inclusion $\bigsqcap_{C(x) \text{ a tree
      in } q_r} C_R^x \sqsubseteq N$ is in $\Tmc_{q_r}$; no
  other concept inclusion can be applied to $N(x)$. Consequently,
  Step~1 of the $\mn{bc}^+_\text{AQ}$ algorithm produces a query
  $\widehat{q}$ that is the left-hand side of this concept inclusion.
  By Point~2 of Lemma~\ref{lem:minlem}, we can
  assume w.l.o.g.\ that the minimization $q_0$ of $\widehat q$
  satisfies $\pi(q_0)=p$, thus $q_0 \in R_p$.

    
  For the induction step, assume that $q_{i+1}$ is obtained from $q_i$
  by application of a concept inclusion $\alpha \in \Tmc_{q_r}$ at
  $x$, yielding a query $\widehat q$, and subsequent minimization
  based on $\Tmc^\mn{min}_{q_r}$. We show that a concept inclusion
  $\alpha'\in \Tmc$ is applicable in $\pi(q_i)$ to yield
  $q_{i+1}$. There are two possibilities:
  \begin{enumerate}
    
    \item $\alpha$ is applied at a variable $x \neq x_0$ in $q_i$, thus is 
    not of the form $E^x_R \sqsubseteq F^x_R$, as $q_i$ is conformant. 
    Then $x$ and the subtree below it are present in $\pi(q_i)$. The concept 
    inclusion $\alpha$, which is present in \Tmc\!\!\!, can be applied at $x$ 
    in 
    $\pi(q_i)$, so $\alpha' = \alpha$.
    
    \item $\alpha = E^x_R \sqsubseteq F^x_R$ is applied at $x_0$ of $q_i$. 
    Application of $\alpha$ results in:
    \begin{itemize}
      \item[(a)] removal of atom $A^x(x_0)$ if $F^x_R = A^x$;
      
      \item[(b)] removal of all existential subtrees rooted at some $y$, 
      together with the atom $r(x_0,y)$, whenever $r(x_0,y)\in q_i$ and 
      $\models F \sqsubseteq \exists r.(q_i|_y)$;
      
      \item[(c)] adding of $A^x(x_0)$ if $F^x_R = A^x$;
      
      \item[(d)] adding of $r(x_0, y)$ and $G$ as a CQ with root $y$ if 
      $F^x_R = \exists r.G$.
      
    \end{itemize} 
    The concept inclusion $\alpha' = E \sqsubseteq F \in \Tmc$ is applicable 
    at $x$ in $\pi(q_i)$. Removal of atoms in (a) and (b) is done 
    correspondingly at variable $x$ in $\pi(q_i)$, the same holds 
    for the adding of concept names or subtrees in (c) and (d).
  \end{enumerate}
  In both cases, it can be verified that the resulting query is 
  $\pi(\widehat q\kern 0.5pt)$. It remains to observe that by Point~3 of
  Lemma~\ref{lem:minlem}, $\mn{bc}_\text{rCQ}$ can minimize
  $\pi(\widehat q\kern 0.5pt)$ to produce $\pi(q_{i+1})$.
\qed
\end{proof}

\end{document}